\newcommand{\mb}[1]{{\mathbf{#1}}}
\newcommand{\mc}[1]{{\mathcal{#1}}}
\newcommand{\revo}[1]{{\color{black}{#1}}}
\newcommand{\PL}[1]{{\color{red}{#1}}}
\newtheorem{thm}{Theorem}
\newtheorem{rem}{Remark}
\newtheorem{problem}{Problem}
\title{\LARGE \bf
Assignment Algorithms for Multi-Robot  Multi-Target Tracking \\ with Sufficient and Limited Sensing Capability 
}
\author{Peihan Li and Lifeng Zhou 
\thanks{The authors are with the Department of Electrical and Computer Engineering, Drexel University, Philadelphia, PA 19104, USA. Email: \texttt{\small \{pl525,lz457\}@drexel.edu}.}
\thanks{This research is sponsored by the Army Research Lab through ARL DCIST CRA W911NF-17-2-0181.}}
\begin{document}

\maketitle
\thispagestyle{empty}
\pagestyle{empty}

\begin{abstract}
We study the problem of assigning robots with actions to track targets. The objective is to optimize the robot team's tracking quality which can be defined as the reduction in the uncertainty of the targets' states. Specifically, we consider two assignment problems given the different sensing capabilities of the robots. In the first assignment problem, a single robot is sufficient to track a target. To this end, we present a greedy algorithm (Algorithm~\ref{algorithm:complete_assignment}) that assigns a robot with its action to each target. We prove that the greedy algorithm has a $1/2$--approximation bound and runs in polynomial time. Then, we study the second assignment problem where two robots are necessary to track a target. We design another greedy algorithm (Algorithm~\ref{algorithm:limited_assignment}) that assigns a pair of robots with their actions to each target. We prove that the greedy algorithm achieves a $1/3$--approximation bound and has a polynomial running time. Moreover, we illustrate the performance of the two greedy algorithms in the ROS-Gazebo environment where the tracking patterns of one robot following one target using Algorithm~\ref{algorithm:complete_assignment} and two robots following one target using Algorithm~\ref{algorithm:limited_assignment} are clearly observed. Further, we conduct extensive comparisons to demonstrate that the two greedy algorithms perform close to their optimal counterparts and much better than their respective ($1/2$ and $1/3$) approximation bounds.

\end{abstract}

\section{Introduction}
Tracking and localizing targets are essential for multi-robot systems to perform tasks such as surveillance, patrolling, and monitoring~\cite{grocholsky2006cooperative,tokekar2013tracking,robin2016multi}. One line of research is to formulate the target tracking as an assignment problem where robots are appropriately assigned to targets to optimize the tracking quality~\cite{jiang2003optimal,spletzer2003dynamic,kamath2007triangulation,zavlanos2008dynamic,tekdas2010sensor,nam2015your,chopra2017distributed,zhou2019sensor,yang2020algorithm}. Particularly, the problem of assigning one robot to track one target (i.e., one-to-one assignment) can be formulated as a bipartite graph matching problem, which can be solved using the Hungarian algorithm~\cite{kuhn1955hungarian}. For the many-to-one assignment problem (i.e., assigning multiple robots to one target), approximation algorithms have been designed to reach the near-optimal tracking performance in our previous work~\cite{zhou2019sensor}. 



Instead of assigning robots directly, we study the problem of assigning robots with actions to track targets (Fig.~\ref{fig:assignment}). Specifically, we consider that each robot has a set of candidate actions (or control inputs), and it chooses one action to execute at each step (due to the natural constraint)~\cite{schlotfeldt2018anytime}. Assigning robot-actions\footnote{For convenience, we henceforth use ``robot-action'' to represent ``robot with action".}  addresses the problems of robot-target assignment and robot-action selection concurrently and enables the robots to actively move and track the assigned targets by executing actions. In addition, similar to \cite{zhou2019sensor}, we consider each robot-action to be assigned to at most one target but allow multiple robot-actions to be assigned to one target. This is motivated by the case where measuring multiple targets by a single robot is time-consuming or even impossible, as is the case for radio sensors~\cite{tokekar2011active,alvarez2017acoustic}. 
With these settings, we study two assignment problems given the different sensing capabilities of the robots and provide a constant-factor approximation algorithm for each. \PL{}

    \begin{figure}[tbh]
    \centering{
    \subfigure[Assignment with sufficient sensing capability]{
    \includegraphics[width=0.479\columnwidth]{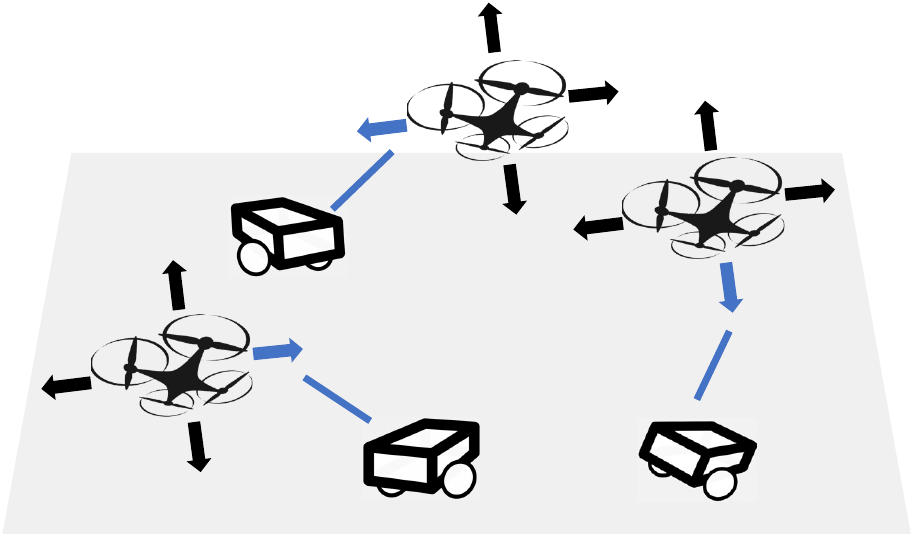}}
    \subfigure[Assignment with limited sensing capability]{
    \includegraphics[width=0.479\columnwidth]{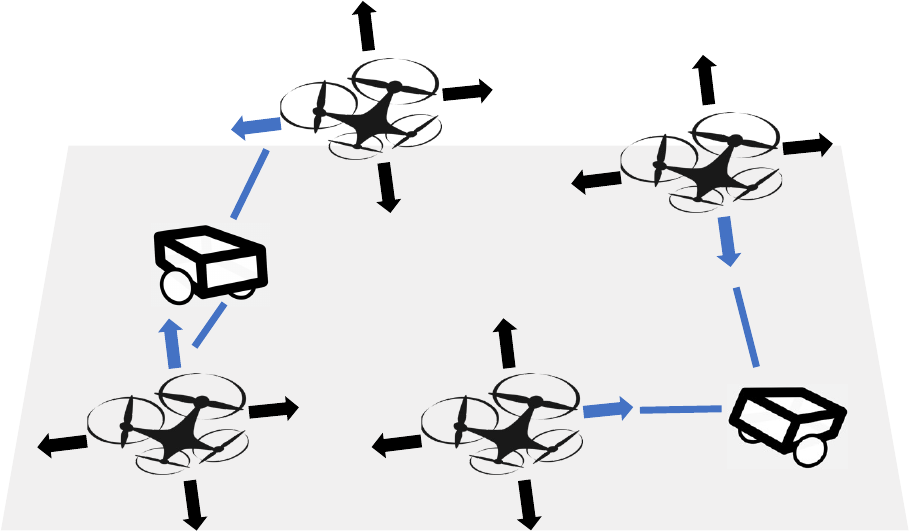}}
    }
    \caption{Assigning robots (drones) with actions to track targets (rovers). Each robot has four candidate actions (represented by the arrows), and it chooses one action (the blue arrow) to execute at a step. The solid blue line represents the robot-action to target assignment. (a): Assigning one robot with action to one target. (b): Assigning a pair of robots with actions to one target. \label{fig:assignment}}
    \end{figure}

We name the first problem as \textit{assignment with sufficient sensing} (Problem~\ref{prob:problem1}, as shown in Fig.~\ref{fig:assignment}-(a)). In this problem, we consider each robot to be sufficient to estimate the state of a target, e.g., a robot with a range-and-bearing sensor. To tackle the problem, we design a greedy algorithm that evaluates the tracking quality of assigning each robot-action to each target and picks the pair of robot-action and target that gives the maximum tracking quality at each round (Algorithm~\ref{algorithm:complete_assignment}). We then prove that the greedy algorithm has a $1/2$--approximation bound and runs in polynomial time. 


We then study the assignment problem where two robots are necessary to estimate the state of a target, which we name as \textit{assignment with limited sensing} (Problem~\ref{prob:problem2}, as shown in Fig.~\ref{fig:assignment}-(b)). This typically happens when the robot uses range or bearing sensor only to measure the state of the target, and thus at least two robots are required to track a target~\cite{kamath2007triangulation, tekdas2010sensor,zhou2019sensor}. To address the problem, we devise a greedy algorithm that evaluates the tracking quality of assigning each pair of robot-actions to each target and picks the triple of robot-actions and target with the maximum tracking quality at each round (Algorithm~\ref{algorithm:limited_assignment}). We also prove that the greedy algorithm has a $1/3$--approximation ratio and a  polynomial running time. Notably, our previous work~\cite{zhou2019sensor} has presented a greedy algorithm that assigns sensors to optimize observability while tracking targets. Differently, here, we focus on assigning robot-actions that allow robots to actively follow and track the targets. 



\textbf{Contributions.} We make four key contributions. 
\begin{itemize}
    \item (\textit{Problem}) We formulate the problems of assigning robot-actions to track targets with both sufficient and limited sensing capability to optimize the team's tracking quality. 
    \item (\textit{Solution}) We propose two greedy algorithms to address the assignment problems, i.e., Algorithm \ref{algorithm:complete_assignment} for Problem~\ref{algorithm:complete_assignment} and Algorithm \ref{algorithm:limited_assignment} for Problem~\ref{algorithm:limited_assignment}.
    \item (\textit{Analysis}) We derive and prove a $1/2$--approximation bound for Algorithm \ref{algorithm:complete_assignment} and a $1/3$--approximation bound for Algorithm \ref{algorithm:limited_assignment}. Moreover, we prove that Algorithm \ref{algorithm:complete_assignment} and Algorithm \ref{algorithm:limited_assignment} run in polynomial time.  

    \item (\textit{Evaluation}) We evaluate Algorithm \ref{algorithm:complete_assignment} and Algorithm \ref{algorithm:limited_assignment} both qualitatively and quantitatively. The qualitative results show that both algorithms consistently steer the robots toward following and tracking the targets. The quantitative results demonstrate that the algorithms perform close to the optimal solution and much better than the theoretical approximation bound. 
\end{itemize}

\section{Problem Formulation}
In this section, we formalize the problem of assigning robots with their actions to track multiple targets. To start with, we clarify the notations used in this work. 

\textbf{Notation.} We denote uppercase letters as scalars (e.g., $X$), bold lowercase letters as vectors (e.g., $\mb{x}$), and bold uppercase letters as matrices (e.g., $\mb{X}$). 
The calligraphic letter $\mc{S}$ denotes a set. $|\mc{S}|$ denotes the cardinality of set $\mc{S}$. 

Next, we formalize the framework's convention and define the problem.
\subsection{Framework} \label{subsec: framework}
\paragraph{Robot}
We consider a target tracking scenario with a team of $N$ mobile robots, denoted as $\mathcal{R}=\{1, \cdots, N\}$. The robot team is tasked to track multiple moving targets using onboard sensors, and each robot $i\in \mathcal{R}$ has the discrete-time motion model: 
    \begin{equation}
        \mb{x}_{i,t+1}  =  f_i(\mb{x}_{i,t}, \mb{a}_{i,t}), ~\forall i \in \mathcal{R},
        \label{eq:robot_motion}
    \end{equation}
where $\mb{x}_i$ denotes the state of robot $i$ and $\mb{a}_i \in \mc{A}_i$ denotes its action (or control input). $\mc{A}_i$ is a finite set of candidate actions that the robot $i$ can choose from. $\mc{A}:=\cup_{i=1}^N \mc{A}_i$ is the joint action set of all robots. Notably, a robot can only select one action from its candidate action set to execute at each time step, which is a natural constraint. 

\paragraph{Target}
We consider $M$ targets to be tracked in the scenario, denoted as $\mc{T}=\{1, \cdots, M\}$. Each target $j\in \mc{T}$ has the discrete-time stochastic motion model:
    \begin{equation}
        \mb{y}_{j,t+1}  =  g_j(\mb{y}_{j,t}) + \mb{w}_{j,t}, ~\forall j \in \mathcal{T},
        \label{eq:target_motion}
    \end{equation}
where $\mb{y}_j$ denotes the state of target $j$, and $\mb{w}_{j,t}$ denotes the zero mean white Gaussian process noise with covariance $\mb{Q}_{j,t}$, i.e., $\mb{w}_{j,t} \sim \mathcal{N}(0, \mb{Q}_{j,t})$. We assume $\mb{w}_{j,t}$ to be independent of the process noises of other targets.

\paragraph{Sensing}
\label{para: sensing}
We consider that a robot $i \in \mc{R}$ observes a target $j \in \mc{T}$ by the following sensing model:
    \begin{equation}
        \mb{z}_{i,t}^j = h_i^j(\mb{x}_{i,t}, \mb{y}_{j,t}) + \mb{v}_{i,t}^j(\mb{x}_{i,t}, \mb{y}_{j,t}), ~i\in \mc{R}, j\in \mc{T},
    \label{eq:measure_model}
    \end{equation}
where $\mb{z}_{i,t}^j$ denotes the measurement of target $j$ by robot $i$'s sensor at time $t$. $\mb{v}_{i,t}^j(\mb{x}_{i,t}, \mb{y}_{j,t})$ denotes the Gaussian measurement noise with zero mean and covariance $\mb{R}(\mb{x}_{i,t}, \mb{y}_{j,t})$, i.e., $\mb{v}_{i,t}^j(\mb{x}_{i,t}, \mb{y}_{j,t}) \sim \mc{N}(0, \mb{R}(\mb{x}_{i,t}, \mb{y}_{j,t}))$. Here, both the sensing model $h_i^j(\mb{x}_{i,t}, \mb{y}_{j,t})$ and measurement noise $\mb{v}_{i,t}^j(\mb{x}_{i,t}, \mb{y}_{j,t})$ depend on the state of robot $i$ and target $j$. 

\paragraph{Objective}
\label{para: objective}
The objective is to maximize the tracking quality of the targets by assigning robots and their selected actions to the targets. This differs from our previous work\cite{zhou2019sensor} where we assign stationary sensors to localize targets. Here, considering that robots are moving by executing actions, we assign robot-actions (per robot an action) to track the targets. That way, we denote $\phi(j)$ as the set of robot-actions
assigned to target $j$ and $\phi^{-1}(i^k)$ as the set of targets assigned to robot-action $i^k$ (i.e., robot $i$ with action $k$). 
We use $\phi_l(j)$ to denote the $l^\text{th}$ robot-action assigned to target $j$. We order the assigned robot-actions based on the robots' IDs such that $\phi_1(j) \leq \phi_2(j) \leq \cdots \leq \phi_L(j)$. Denote the corresponding robot for  each robot-action $\phi_l(j)$ as $i(\phi_l(j))$. 


The target state is estimated using the extended Kalman filter (EKF) with the measurements taken by the robots. Therefore, the tracking quality, denoted by $q$, can be computed using the covariance matrix from EKF~\cite[Sec. 4.1]{jawaid2015submodularity}. Specifically, we can define the tracking quality by metrics such as the trace, log determinant, or maximum eigenvalue of the covariance matrix~\cite{jawaid2015submodularity}. As an example, the tracking quality $q$ can be defined as the difference between the trace of the predicted (\textit{a priori}) covariance and that of  the updated (\textit{a posteriori}) covariance matrix~\cite[Sec. V]{zhou2023robust}. Particularly, we use $q(i^k, j)$ and $q(\{i^k, {i'}^{k'}\}, j)$ to denote the tracking quality of assigning robot-action $i^k$ and robot-actions $\{i^k, {i'}^{k'}\}$ to target $j$, respectively. 




\subsection{Problem Definition}
Given the framework introduced in Sec. \ref{subsec: framework}, we formally define the problems of assigning robot-actions to track the targets. We consider two assignment settings---the robot with sufficient and limited sensing capability. The sufficient sensing capability means that a single robot is able to localize and track the target, \textit{e.g.,} a robot has a range-and-bearing sensor. Notably, a range-and-bearing sensor can take both the measurements of the distance and orientation of the target and thus is sufficient to estimate the state of the target~\cite{atanasov2015decentralized,schlotfeldt2018anytime}. In contrast, a robot with limited sensing capability cannot estimate the state of the target by itself. For example, with a range or bearing sensor only, at least two robots are required to track a target~\cite{tekdas2010sensor,zhou2019sensor}. 

For both settings, we constrain each robot-action to
be assigned to one target only. This is motivated by the cases where sensing multiple targets by a single robot can be time-consuming if the robot uses a radio sensor for tracking~\cite{tokekar2011active,alvarez2017acoustic} or communicating multiple measurements can be time and energy-consuming. Therefore, for the robot with sufficient sensing capability, we consider the minimum size of the robot team to be equal to the size of the targets, e.g., $N \geq M$. While for the limited sensing setting, we consider the size of the robot team to be at least 
twice as many 
of the targets since a minimum of two robots are required to track one target, e.g., $N \geq 2M$. The two assignment problems are formally defined in Problem~\ref{prob:problem1} and Problem~\ref{prob:problem2}.


\begin{problem}[Assignment with Sufficient Sensing]
    
    Given a set of robots $\mathcal{R}=\{1, \cdots, N\}$, each with sufficient sensing capability, a set of candidate actions for all robots $\mc{A}$, and a set of targets $\mc{T}=\{1, \cdots, M\}$, find an assignment of robot-actions to targets to
    \begin{align}
    \begin{split}
        \max \sum_{j=1}^M q(\phi_1(j), j)
    \end{split}
    \label{eq:problem1}
    \end{align}
    with added constraints that each robot-action is assigned to at most one target, i.e., $|\phi^{-1}(i^k)| \leq 1$, assuming $N\geq M$ and each robot can only select one action at each step. 
    \label{prob:problem1}
\end{problem}




\begin{problem}
    [Assignment with Limited Sensing]
    Given a set of robots $\mathcal{R}=\{1, \cdots, N\}$, each with limited sensing capability, a set of candidate actions for all robots $\mc{A}$, and a set of targets $\mc{T}=\{1, \cdots, M\}$, find an assignment of pairs of robot-actions to targets to
    \begin{align}
    \begin{split}
        \max \sum_{j=1}^M q(\{\phi_1(j), \phi_2(j)\}, j)
    \end{split}
    \label{eq:problem2}
    \end{align}
    with added constraints that each robot-action is assigned to at most one target, i.e., $|\phi^{-1}(i^k)| \leq 1$, assuming $N\geq 2M$ and each robot can only select one action at each step.  
    \label{prob:problem2}
\end{problem}
In the next section, we present the assignment algorithms for solving Problem~\ref{prob:problem1} and Problem~\ref{prob:problem2}. 

\section{Assignment Algorithms and Analysis}\label{sec:algana}
In this section, we present two greedy algorithms to tackle Problem~\ref{prob:problem1} and Problem~\ref{prob:problem2} and prove that the algorithms have constant factor approximation bounds and run in polynomial time.

\subsection{1/2–Approximation Algorithm for Problem~1}
We first study the assignment with sufficient sensing capability (Problem \ref{prob:problem1}) where a single robot is capable of estimating the state of a target. 
We propose a greedy algorithm in Algorithm~\ref{algorithm:complete_assignment} where $q(\texttt{GREEDY})$ denotes the total tracking quality obtained by the greedy algorithm. In each round, we compute the tracking quality for all pairs of robot-action and target $q(\phi_1(j), j), ~\phi(j)\in \mc{A}, j\in\mc{T}$, and select the pair that has the maximum $q(\phi_1(j), j)$. Then we remove the robot $i(\phi_1(j))$ (that the action $\phi_1(j)$ belongs to) from robot set $\mc{R}$, remove robot $i(\phi_1(j))$'s action set $\mc{A}_{i(\phi_1(j))}$  from the joint action set $\mc{A}$, and remove target $j$ from target set $\mc{T}$. This is due to the constraints that each robot-action can be assigned to at most one target and each robot executes one action per step. The assignment is complete when all targets are tracked, \textit{i.e.,} $\mc{T} = \emptyset$.

\begin{algorithm}
\caption{Greedy Robot-action Assignment}
$h\leftarrow 0, ~q(\texttt{GREEDY})\leftarrow 0$\\
\While{true}{
Compute all possible 
$q(\phi_1(j), j)$.\\
Select the action and target pair $(\phi_1(j), j)$ with maximum $q(\phi_1(j), j)$ defined as $q_{\max}$.\\ $q(\texttt{GREEDY})\leftarrow q(\texttt{GREEDY})+q_{\max}$.\\
Remove $i(\phi_1(j))$ from the robot set $\mc{R}$, remove $\mc{A}_{i(\phi_1(j))}$ from joint action set $\mc{A}$, and remove $j$ from the target set $\mathcal{T}$.\\
$h\leftarrow h + 1$
}   
 \label{algorithm:complete_assignment}
\end{algorithm}

\begin{thm}\label{thm:12thm}
    $q(\texttt{GREEDY}) \geq \frac{1}{2}q(\texttt{OPT})$ where \texttt{OPT} denotes the optimal algorithm for problem \ref{prob:problem1}. The running time for Algorithm \ref{algorithm:complete_assignment} is $O(|\mc{A}|M^2).$ 
\end{thm}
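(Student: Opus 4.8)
The plan is to recognize Problem~\ref{prob:problem1} as a maximum‑weight bipartite matching problem. Build a bipartite graph with the robots $\mc{R}$ on one side and the targets $\mc{T}$ on the other, and give the edge $(i,j)$ weight $w(i,j):=\max_{k\in\mc{A}_i} q(i^k,j)$. The constraint $|\phi^{-1}(i^k)|\le 1$ together with ``one action per robot per step'' means every robot is matched to at most one target, so feasible assignments are exactly matchings (with weight equal to the objective in \eqref{eq:problem1}). Under this view, Algorithm~\ref{algorithm:complete_assignment} is precisely the classical greedy matching rule: repeatedly pick the heaviest remaining edge and delete both of its endpoints (deleting a robot endpoint corresponds to removing $\mc{A}_{i(\phi_1(j))}$ from $\mc{A}$). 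Hence I would adapt the textbook $1/2$ bound for greedy maximum‑weight matching.

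\textbf{Approximation bound.} Let $g_1,\dots,g_M$ be the robot‑action/target pairs greedy selects in order, with values $w_1\ge\cdots\ge w_M$, so $q(\texttt{GREEDY})=\sum_h w_h$; let \texttt{OPT} be an optimal assignment. Since $N\ge M$, greedy runs $M$ rounds and assigns every target, so every pair $p=(i^k,j)$ used by \texttt{OPT} shares its target $j$ with some $g_h$. Define a charging map sending each \texttt{OPT} pair $p$ to the \emph{earliest} $g_h$ that shares either the robot $i$ or the target $j$ of $p$; this is well defined by the previous sentence. I would then verify two claims. (i) If $p$ is charged to $g_h$, then at the start of round $h$ both the robot and target of $p$ are still available, so $p$ is among the candidate pairs in round $h$ and therefore $w_h\ge q(p)$: indeed, had $j$ (resp.\ robot $i$, and with it all of its actions) been removed earlier, it would have been removed by some $g_{h'}$ with $h'<h$ that shares that endpoint with $p$, contradicting minimality of $h$. (ii) Each $g_h$ receives at most two \texttt{OPT} pairs — at most one through its robot endpoint and at most one through its target endpoint, because \texttt{OPT} is itself a matching. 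Combining, $q(\texttt{OPT})=\sum_p q(p)\le\sum_h \big(\text{at most }2\big)\,w_h=2\,q(\texttt{GREEDY})$, i.e.\ $q(\texttt{GREEDY})\ge\tfrac12 q(\texttt{OPT})$. (This uses nonnegativity of $q$, standard for uncertainty‑reduction metrics, so that \texttt{OPT} also assigns every target.)

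\textbf{Running time.} Greedy executes exactly $M$ iterations since one target is removed per round. In each iteration, line~3 evaluates $q(\phi_1(j),j)$ over all currently available robot‑actions (at most $|\mc{A}|$) and all available targets (at most $M$), i.e.\ $O(|\mc{A}|M)$ evaluations, and the $\argmax$ and the removals are dominated by this. Summing over the $M$ rounds gives $O(|\mc{A}|M^2)$.

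\textbf{Expected main obstacle.} The conceptual load is light once the matching reformulation is fixed; the step needing the most care is the charging argument — in particular, guaranteeing that \emph{every} \texttt{OPT} pair gets charged (this is exactly where $N\ge M$ and nonnegativity of $q$ enter, ensuring greedy produces a complete target assignment and \texttt{OPT} does too) and checking the ``at most two charges per greedy pick'' bound given that greedy deletes a robot's \emph{entire} action set, not a single robot‑action, at each step.
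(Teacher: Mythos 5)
Your proposal is correct and follows essentially the same route as the paper: both prove the bound by charging each \texttt{OPT} pair to the earliest \texttt{GREEDY} pair sharing a robot or a target endpoint, observing that the charged pair was still available (hence has value at most the greedy pick) and that each greedy pair absorbs at most two charges, and both count the running time as $M$ rounds of $O(|\mc{A}|M)$ work. Your version merely makes explicit two points the paper leaves implicit --- that every \texttt{OPT} pair does get charged because greedy covers all $M$ targets, and that the ``earliest'' choice is what guarantees availability --- so it is, if anything, a slightly tighter write-up of the same argument.
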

\begin{proof}
\label{proof:12proof}
    We first prove the $1/2$ approximation bound of Algorithm~\ref{algorithm:complete_assignment}. The proof builds on the results in our previous work~\cite[Sec. IV-B]{zhou2019sensor}. Recall that $q(\texttt{GREEDY})$ and $q(\texttt{OPT})$ denote the sum of tracking quality of pairs consisting of one target and one assigned action. As a shorthand, we use $q^*(j)$ and $q^{g}(j)$ to denote the tracking quality of the pair assigned to target $j$ by \texttt{OPT} and \texttt{GREEDY}, respectively. 
    

    We show that there exists a many-to-one mapping $\mc{F}: [1,\cdots, M] \to [1,\cdots, M]$ such that: 
    \begin{enumerate}
        \item $q^{*}(j)\leq q^{g}(\mc{F}(j))$; and 
        \item $|\mc{F}^{-1}(y)|\leq 2$ for all $y\in \mc{Y}$ where $\mc{Y} \subseteq [1,\cdots, M]$ is the range of $\mc{F}$.
    \end{enumerate}
    This mapping conveys that each pair in \texttt{OPT} is mapped to a pair in \texttt{GREEDY} whose tracking quality is at least as high and no pair in \texttt{GREEDY} has more than two items in \texttt{OPT} mapped to it. 
    We first show that the approximation ratio holds if such a mapping exists. Then, we prove the existence of such a mapping by constructing a specific one. 
    
    We prove $q(\texttt{GREEDY}) \geq \frac{1}{2}q(\texttt{OPT})$ if such a mapping $\mc{F}$ exists as below: 
    \begin{align}
        q(\texttt{OPT}) &= \sum_{j=1}^{M} q^{*}(j) \leq \sum_{j=1}^{M} q^{g}(\mc{F}(j))\nonumber\\
        &= \sum_{y\in \mathcal{Y}} q^{g}(y) |\mc{F}^{-1}(y)| \nonumber\\
        &\leq 2 \sum_{y\in \mathcal{Y}} q^{g}(y) \leq 2 \sum_{m=1}^{M} q^{g}(m)\nonumber\\
        &= 2 q(\texttt{GREEDY}).
    \label{eqn:1_2approximation}
    \end{align}
     The first inequality holds because $q^{*}(j)\leq q^{g}(\mc{F}(j))$. The second equality holds because the mapping $\mc{F}$ maps each item in $[1,\cdots, M]$ to set $\mc{Y}$. The second inequality holds because $|\mc{F}^{-1}(y)|\leq 2$. The third inequality holds because $\mc{Y} \subseteq [1,\cdots, M]$.
    
    Next, we construct such a mapping to show it always exists. We will define $\mc{M}$ in the order in which the pairs are selected by \texttt{GREEDY}. Assume \texttt{GREEDY} selects the pair $(\phi_i, j)$ in a round. Then we have $q^g(j) = q(\phi_i, j)$. We will then map at most two pairs in \texttt{OPT} to the pair $(\phi_i, j)$ in \texttt{GREEDY}, which is described in the following two cases.
    

    \begin{figure}[tbh]
    \centering{
    \subfigure[Case 1]{
    \includegraphics[width=0.35\columnwidth]{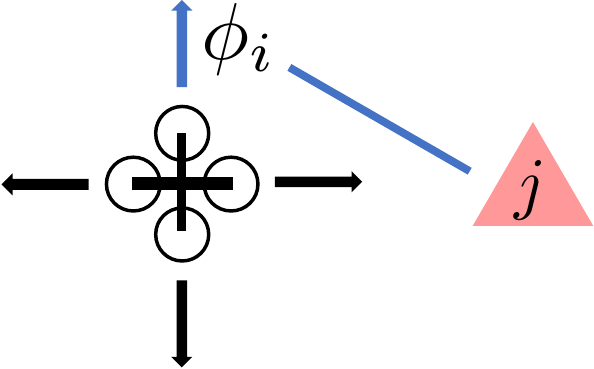}} ~~
    \subfigure[Case 2]{
    \includegraphics[width=0.45\columnwidth]{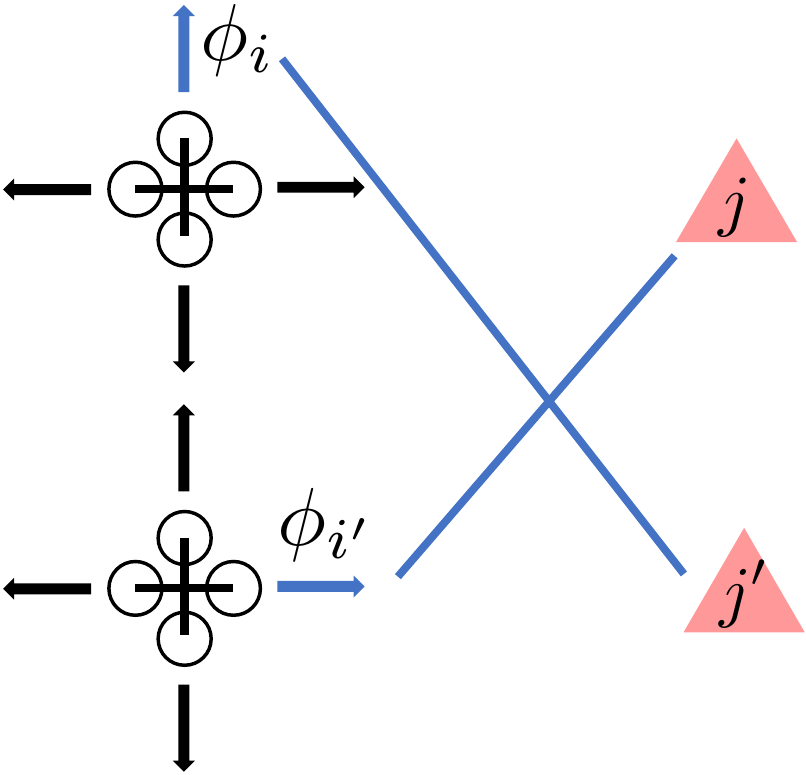}}
    }
    \caption{The optimal solution in two cases. \revo{In all cases,  \texttt{GREEDY} selects pair $(\phi_i,j)$. } Case 1: \texttt{OPT} charges $q(\phi_i,j)$ to the same pair $(\phi_i,j)$ selected by the \texttt{GREEDY}.  Case 2: \texttt{OPT} charges $q(\phi_i,j)$ to at most two pairs --- $(\phi_i,j')$ and  $(\phi_{i'},j)$. \label{fig:opt_choose}}
    \end{figure}
    
    
    \begin{enumerate}
        \item \texttt{OPT} also selects $(\phi_i, j)$(Fig.~\ref{fig:opt_choose}-(a)). If $\mc{F}(j)$ has not been defined in prior rounds, we define $\mc{F}(j) = j$. Notably, here, $q^g(j)=q^*(j)$ and $|\mc{F}^{-1}(j)| = 1$. That way, the two conditions for a valid mapping are met.
    
        \item One of $(\phi_i, j)$ appears in a pair selected by \texttt{OPT} (Fig.~\ref{fig:opt_choose}-(b)). As an example, \texttt{OPT} selects $(\phi_i, j')$ and $(\phi_{i'}, j)$ where $i\neq i'$ and $j' \neq j$. If $\mc{F}(j')$ has not been defined in prior rounds, we define $\mc{F}(j') = j$. Notably, if $\mc{F}(j')$ was not defined in a previous round, $q^*(j')\leq q^g(j)$. Otherwise, \texttt{GREEDY} would select the pair $(\phi_i, j')$ in this round. Similarly, if $\mc{F}(j)$ has not been defined in prior rounds, we define $\mc{F}(j) = j$. With a similar argument, if $\mc{F}(j)$ was not defined in a previous round, $q^*(j)\leq q^g(j)$. Otherwise, \texttt{GREEDY} would select the pair $(\phi_i', j)$ in this round. Thus, in this case,  $|\mc{F}^{-1}(j)| = 2$. That way, the two conditions for a valid mapping are met.  
        
    \end{enumerate}
    Therefore, with such a mapping $\mc{F}$, it holds that $q(\texttt{GREEDY}) \geq \frac{1}{2}q(\texttt{OPT})$.
    
    Next, we prove the running time for Algorithm \ref{algorithm:complete_assignment}. The ``while" loop takes $M$ rounds since all targets must be tracked in the end. Inside each round of the ``while" loop, all possible pairs are computed and the best one is selected, which takes $O(|\mc{A}|M)$ time. Overall, Algorithm \ref{algorithm:complete_assignment} runs in $O(|\mc{A}|M^2)$ time.
\end{proof}

\subsection{1/3–Approximation Algorithm for Problem~2}
Next, we study the assignment with limited sensing capability (Problem \ref{prob:problem2}) where at least two robots are required to estimate the state of a target. The goal is to assign non-overlapping pairs of robot-action to targets such that the tracking quality $q(\{\phi_1(j), \phi_2(j)\}, j)$ is maximized. We propose a greedy algorithm in Algorithm~\ref{algorithm:limited_assignment}. In each round, we compute the tracking quality for all triples consisting of two actions and one target $q(\{\phi_1(j), \phi_2(j)\}, j), ~\phi_1(j), \phi_2(j), \in \mc{A}, j\in\mc{T}$, and select the triple that has the maximum $q(\{\phi_1(j), \phi_2(j)\}, j)$. Then we remove the robot $i(\phi_1(j))$ (that the action $\phi_1(j)$ belongs to) and $i(\phi_2(j))$ (that the action $\phi_2(j)$ belongs to) from robot set $\mc{R}$, remove robot $i(\phi_1(j))$'s action set $\mc{A}_{i(\phi_1(j))}$ and robot $i(\phi_2(j))$'s action set $\mc{A}_{i(\phi_2(j))}$ from the joint action set $\mc{A}$, and remove target $j$ from target set $\mc{T}$. This is because each robot-action can be assigned to at most one target and each robot executes one action per step. The assignment is complete when all targets are tracked, \textit{i.e.,} $\mc{T} = \emptyset$.


\begin{algorithm}
\caption{Greedy Robot-action Pair Assignment}
$h\leftarrow 0, ~q(\texttt{GREEDY})\leftarrow 0$\\
\While{true}{
Compute all possible 
$q(\{\phi_1(j),\phi_2(j)\}, j)$.\\
Select the triple $(\{\phi_1(j),\phi_2(j)\}, j)$ with maximum $q(\{\phi_1(j),\phi_2(j)\}, j)$ defined as $q_{\max}$.\\ $q(\texttt{GREEDY})\leftarrow q(\texttt{GREEDY})+q_{\max}$.\\
Remove $i(\phi_1(j)),i(\phi_2(j))$ from the robot set $\mathcal{R}$, remove $\mc{A}_{i(\phi_1(j))}$, $\mc{A}_{i(\phi_2(j))}$ from joint action set $\mc{A}$, and remove $j$ from the target set $\mathcal{T}$.\\
$h\leftarrow h + 1$
}   
 
 \label{algorithm:limited_assignment}
\end{algorithm}

\begin{thm}\label{thm:13thm}
    $q(\texttt{GREEDY} \geq \frac{1}{3}q(\texttt{OPT})$ where \texttt{OPT} denotes the optimal algorithm for Problem \ref{prob:problem2}. The running time for Algorithm \ref{algorithm:limited_assignment} is $O(|\mc{A}|^2M^2)$. 
\end{thm}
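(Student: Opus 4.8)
The plan is to mirror the structure of the proof of Theorem~\ref{thm:12thm}, adapting the charging argument from a one-to-one matching (robot-action to target) to the one in which each target receives a \emph{pair} of robot-actions. As before, write $q^{*}(j)$ and $q^{g}(j)$ for the tracking quality of the pair of robot-actions assigned to target $j$ by \texttt{OPT} and \texttt{GREEDY}, respectively. I would establish a many-to-one mapping $\mc{F}\colon[1,\dots,M]\to[1,\dots,M]$ with (i) $q^{*}(j)\le q^{g}(\mc{F}(j))$ and (ii) $|\mc{F}^{-1}(y)|\le 3$ for every $y$ in the range of $\mc{F}$. Once such a mapping exists, the approximation bound follows by the identical chain of (in)equalities as in~\eqref{eqn:1_2approximation}, with the factor $2$ replaced by $3$:
\begin{align}
q(\texttt{OPT}) &= \sum_{j=1}^{M} q^{*}(j) \le \sum_{j=1}^{M} q^{g}(\mc{F}(j)) = \sum_{y} q^{g}(y)\,|\mc{F}^{-1}(y)| \nonumber\\
&\le 3\sum_{y} q^{g}(y) \le 3\sum_{m=1}^{M} q^{g}(m) = 3\,q(\texttt{GREEDY}). \nonumber
\end{align}

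Next I would construct $\mc{F}$ by processing the triples in the order \texttt{GREEDY} selects them. Suppose in some round \texttt{GREEDY} picks the triple $(\{\phi_i,\phi_{i'}\},j)$, so $q^{g}(j)=q(\{\phi_i,\phi_{i'}\},j)$. The key observation is that this selection consumes two robots, $i(\phi_i)$ and $i(\phi_{i'})$, together with all of their candidate actions. In the optimal assignment, each of these two robots contributes (at most) one robot-action to (at most) one target's pair; hence at most two targets $j',j''$ in \texttt{OPT} use a robot-action belonging to $\{i(\phi_i),i(\phi_{i'})\}$, plus possibly the target $j$ itself if \texttt{OPT}'s pair on $j$ is still "alive." I would define $\mc{F}$ to send each of these (at most three) not-yet-assigned targets to $j$, exactly as in the case analysis of Theorem~\ref{thm:12thm}: for any such target $j^{\sharp}$ whose image is still undefined, set $\mc{F}(j^{\sharp})=j$. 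The inequality $q^{*}(j^{\sharp})\le q^{g}(j)$ holds because, at the moment of this round, the robot-actions forming \texttt{OPT}'s pair on $j^{\sharp}$ were still available to \texttt{GREEDY} (its image being undefined means none of its robots had been removed yet), so \texttt{GREEDY}'s greedy choice has value at least $q^{*}(j^{\sharp})$. Since at most three targets are charged to $j$, condition (ii) holds with bound $3$, and every target gets assigned because whenever a target's \texttt{OPT}-pair becomes infeasible it is because one of its two robots was just removed, which is precisely a round in which we charge it. Finally, for the running time: the \texttt{while} loop runs $M$ times; within each round the algorithm enumerates all triples $(\{\phi_1(j),\phi_2(j)\},j)$, of which there are $O(|\mc{A}|^{2}M)$, giving $O(|\mc{A}|^{2}M^{2})$ overall.

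I expect the main obstacle to be the bookkeeping in the charging argument: unlike the one-to-one case where removing one robot-action can "block" at most one \texttt{OPT} pair through the action itself plus the target, here removing two whole robots can block \emph{three} distinct \texttt{OPT} pairs simultaneously (one per blocked robot, plus the target of the selected triple), and I must argue carefully that no target is charged twice to different \texttt{GREEDY} rounds and that every target is charged at least once. The "image still undefined $\Rightarrow$ \texttt{OPT}-pair still feasible for \texttt{GREEDY}" invariant is the crux and must be maintained throughout; I would state it explicitly as a loop invariant and verify it is preserved in each of the (up to three) sub-cases, analogous to the two cases in the proof of Theorem~\ref{thm:12thm}. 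A subtle point worth double-checking is the possibility that a single \texttt{OPT} pair uses robot-actions from \emph{both} removed robots $i(\phi_i)$ and $i(\phi_{i'})$; this only helps, as it merely reduces the count of distinct charged targets below three.
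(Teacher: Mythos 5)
Your proposal is correct and follows essentially the same route as the paper, which itself only sketches this proof by reference to Theorem~\ref{thm:12thm}: construct a many-to-one mapping $\mc{F}$ with $q^{*}(j)\le q^{g}(\mc{F}(j))$ and $|\mc{F}^{-1}(y)|\le 3$, then rerun the chain of inequalities with the factor $2$ replaced by $3$, and count $M$ rounds of $O(|\mc{A}|^{2}M)$ work for the running time. In fact your charging argument (two removed robots each blocking at most one \texttt{OPT} pair, plus the selected target itself, with the ``image undefined $\Rightarrow$ \texttt{OPT}-pair still feasible'' invariant) supplies exactly the detail the paper leaves implicit in its ``three cases.''
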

\begin{proof}
    \label{proof:13proof}
    We first prove $1/3$ approximation bound of Algorithm~\ref{algorithm:limited_assignment}. Since the proof is similar to that of Theorem~\ref{thm:12thm}, we omit the detailed version but provide a sketch below.

    First, we list three cases (similar to the two cases in the proof of Theorem~\ref{thm:12thm}) and show there exists a many-to-one mapping $\mathcal{F}: [1,\cdots, M] \to [1,\cdots, M]$ such that: 
\begin{enumerate}
        \item $q^{*}(j)\leq q^{g}(\mc{F}(j))$; and 
        \item $|\mc{F}^{-1}(y)|\leq 3$ for all $y\in \mc{Y}$ where $\mc{Y} \subseteq [1,\cdots, M]$ is the range of $\mc{F}$.
\end{enumerate}
Here $q^{*}(j)$ and $q^{g}(\mc{F}(j))$ denote the tracking quality of the triple assigned to target $j$  by \texttt{OPT} and to target $\mc{F}(j)$ by \texttt{GREEDY}, respectively. Second, following the steps in Equation~\ref{eqn:1_2approximation} and replacing the scalar $2$ by $3$, we reach the $3$ approximation. 

    Then, we prove the running time for Algorithm \ref{algorithm:limited_assignment}. Similarly, the ``while" loop takes $M$ rounds since all targets must be tracked eventually. Inside each round of the ``while" loop, all possible triples are computed and the best one is selected, which takes $O(|\mc{A}|^2M)$ time. Overall, Algorithm \ref{algorithm:limited_assignment} runs in $O(|\mc{A}|^2M^2)$ time. 
\end{proof}

\begin{rem}\label{rem:rem1}
\end{rem}

\begin{proof}
    First, we list $n+1$ cases (similar to the two cases in the proof of Theorem~\ref{thm:12thm}) and show there exists a many-to-one mapping $\mathcal{F}_n: [1,\cdots, M] \to [1,\cdots, M]$ such that: 
\begin{enumerate}
        \item $q^{*}(j)\leq q^{g}(\mc{F}_n(j))$; and 
        \item $|\mc{F}_n^{-1}(y)|\leq 3$ for all $y\in \mc{Y}$ where $\mc{Y} \subseteq [1,\cdots, M]$ is the range of $\mc{F}_n$.
\end{enumerate}
Here $q^{*}(j)$ and $q^{g}(\mc{F}_n(j))$ denote the tracking quality of the $n+1$-tuple assigned to target $j$  by \texttt{OPT} and to target $\mc{F}_n(j)$ by \texttt{GREEDY}, respectively. Second, following the steps in Equation~\ref{eqn:1_2approximation} and replacing the scalar $2$ by $n+1$, we reach the $n+1$ approximation for this general case.  
\end{proof}

\begin{rem}\label{rem:rem2}
The bounds (i.e., $1/2$ in Theorem~\ref{thm:12thm}, $1/3$ in Theorem~\ref{thm:13thm}, and $1/(n+1)$ in Remark~\ref{rem:rem1}) obtained by the greedy algorithms hold for any arbitrary metric of tracking quality such as trace, log determinant, or maximum eigen-value of the covariance matrix and do not require additional assumptions on the properties of the metrics such as monotonicity or submodularity~\cite{jawaid2015submodularity}. 
\end{rem}

\section{Simulation}
In this section, we evaluate the proposed greedy algorithms (Algorithm~\ref{algorithm:complete_assignment} and Algorithm~\ref{algorithm:limited_assignment}) with extensive simulations. We first qualitatively illustrate the effectiveness of the two algorithms for multi-target tracking in a ROS-Gazebo environment. Then we quantitatively demonstrate the near-optimal performance of the algorithms through comparisons. The evaluations are performed on a ThinkPad with Intel Core i7 and 32 GB Memory. The code of the evaluations is available online\footnote{\url{https://github.com/Zhourobotics/assignment-target-tracking.git}}. Before that, we introduce the target tracking framework (Section~\ref{subsec: framework}) in more detail. Note that the algorithms are agnostic to the motion models. The motion models introduced here are for simulation purposes only.



\paragraph{Robot motion model}
Each robot $i \in \mc{R}$ follow the unicycle motion model:
\begin{align*} 
\begin{split}
{\begin{pmatrix}
x_{i,t+1}^1 \\ x_{i,t+1}^2 \\ \theta_{i,t+1}
\end{pmatrix} = 
\begin{pmatrix}
x_{i,t}^1  \\ x_{i,t}^2 \\ \theta_{i,t}
\end{pmatrix} + 
\begin{pmatrix}
v_i \Delta T \cos(\theta_{i,t})\\
v_i \Delta T \sin(\theta_{i,t})\\
\Delta T \omega_i
\end{pmatrix},}
\end{split}
\end{align*}
where the state of the robot is represented as $\mb{x}_i = [x_{i,t}^1, x_{i,t}^2, \theta_{i,t}]^\top$ with $ [x_{i,t}^1, x_{i,t}^2]^\top$ as the position in the 2D plane and $\theta_{i,t}$ as the orientation of the robot relative to the world frame. $\mb{a}_i = [v_i, \omega_i]^\top$ represents the robot's action (or control input) where $v_i$ and $\omega_i$ are the linear and angular velocities, respectively. $\Delta T$ denotes the time interval between two consecutive time steps. 

\paragraph{Target motion model}
Each target $j \in \mc{T}$ follows a circular motion model with the added white Gaussian noise:
\begin{align*} 
\begin{split}
\begin{pmatrix}
y_{j,t+1}^1 \\ y_{j,t+1}^2
\end{pmatrix} = 
\begin{pmatrix}
y_{j,t}^1  \\ y_{j,t}^2
\end{pmatrix} + 
\begin{pmatrix}
v_j \cos(\Delta T \omega_j)\\
v_j \sin(\Delta T \omega_j)
\end{pmatrix} + \mb{w}_{j,t}, 
\end{split}
\end{align*}
where the target position in 2D space is denoted as $\mb{y}_{j, t} = [y_{j,t}^1, y_{j,t}^1]^\top$. $[v_j, \omega_j]$ denotes the linear and angular velocity of the target. The white Gaussian noise is $\mb{w}_{j,t} \sim \mathcal{N}(0, \mb{Q})$ with 
$$
\mb{Q} = \begin{bmatrix}
\sigma_{j}^2  & 0  \\ 0 & \sigma_{j}^2
\end{bmatrix}.
$$
\paragraph{Sensor model}
There are two kinds of sensors---range sensor and bearing sensor. The range sensor has the observation model:
\begin{equation*}
    r(\mb{x}_{i,t}, \mb{y}_{j,t}) = \sqrt{(y_{j,t}^2- x_{i,t}^2)^2 + (y_{j,t}^1 - x_{i,t}^1)^2}
\end{equation*}
with a zero mean white Gaussian measurement noise $\sigma_{r}^2(\mb{x}_{i,t}, \mb{y}_{j,t})$. The bearing sensor has the observation model:
\begin{equation*}
    \gamma(\mb{x}_{i,t}, \mb{y}_{j,t}) = \text{atan2}(y_{j,t}^2- x_{i,t}^2, y_{j,t}^1 - x_{i,t}^1) - \theta_{i,t}
\end{equation*}
with a zero mean white Gaussian measurement noise $\sigma_{b}^2(\mb{x}_{i,t}, \mb{y}_{j,t})$. For both measurement models, the noise increases linearly as the distance between robot $i$ and target $j$ increases. 

In the sufficient sensing scenario (Problem \ref{prob:problem1}), each robot has a range-and-bearing sensor. In this case, we combine the range and bearing observations to estimate the target position. While in the limited sensing scenario, each robot can only collect range \textit{or} bearing measurement. In this case, we stack two range observations or two bearing observations 
 from two robots for the estimation.


\paragraph{Objective function}
We evaluate the tracking quality by the reduction in the trace of the covariance matrix~\cite{zhou2023robust}:
\begin{equation*}
    q =\texttt{Tr}(\mb{\Sigma}_{\mc{T}, t-1|t}) - \texttt{Tr}(\mb{\Sigma}_{\mc{T}, t|t}),
\end{equation*}
where $\mb{\Sigma}_{\mc{T}, t-1|t}$ is the a prior covariance matrix and $\mb{\Sigma}_{\mc{T}, t|t}$ is the a posteriori covariance matrix from EKF. Our framework and proposed algorithms are agnostic to the metrics of tracking quality, as stated in Remark \ref{rem:rem2}.

\subsection{Qualitative Result}
\label{subsec:quali}

We evaluate the performance of the greedy algorithms (Algorithm~\ref{algorithm:complete_assignment}
and Algorthm~\ref{algorithm:limited_assignment}) using ROS-Gazebo simulator on Ubuntu 20.04. We use the Hummingbird drones as robots to track the Scarab cars (targets) in a $20\times20 ~m^2$ environment. 
The robots have the same candidate action set with $\mb{a}_i=\{0, \pm1.5\}~m/s \times\{0, \pm0.7\}~rad/s$ for each robot $i$. Each target $j$ follows a circular motion with a linear velocity of $\mb{u}_j=1.2~m/s$ and a angular velocity chosen from $\{0.15, 0.2, 0.3, 0.6\}~rad/s$. 
To avoid the issue of collision avoidance, we let the robots (drones) fly at different altitudes. Top views of the target tracking in action are shown in Figure~\ref{fig:alg1_sim} and Figure~\ref{fig:alg2_sim}. A video of the Gazebo simulations is available online\footnote{\url{https://youtu.be/5w5XJefu7sE}}.


\begin{figure*}[t]
\centering{
\subfigure[$t = 0$]{\includegraphics[width=0.483\columnwidth]{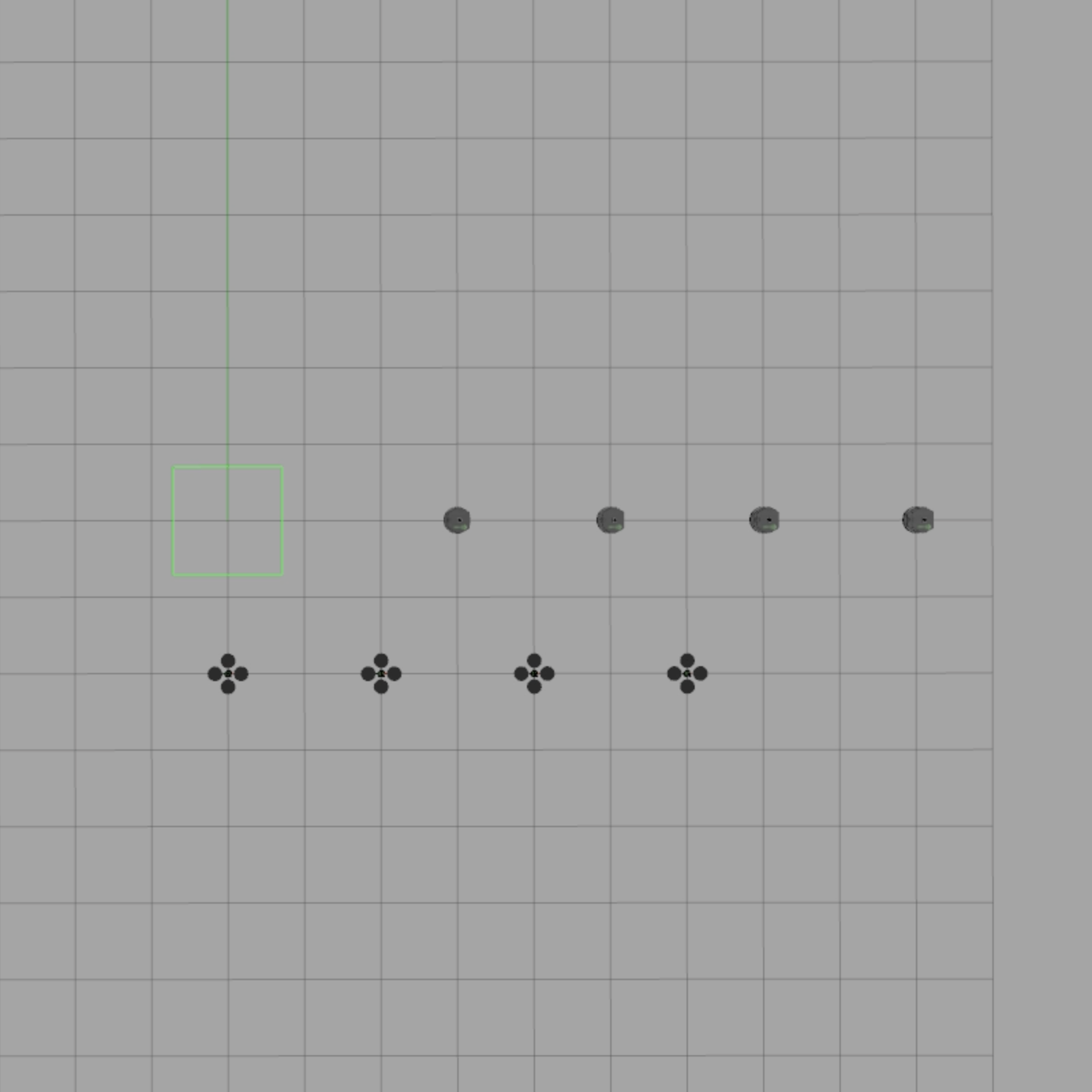}}~~~~~
\subfigure[$t = 30$]{\includegraphics[width=0.483\columnwidth]{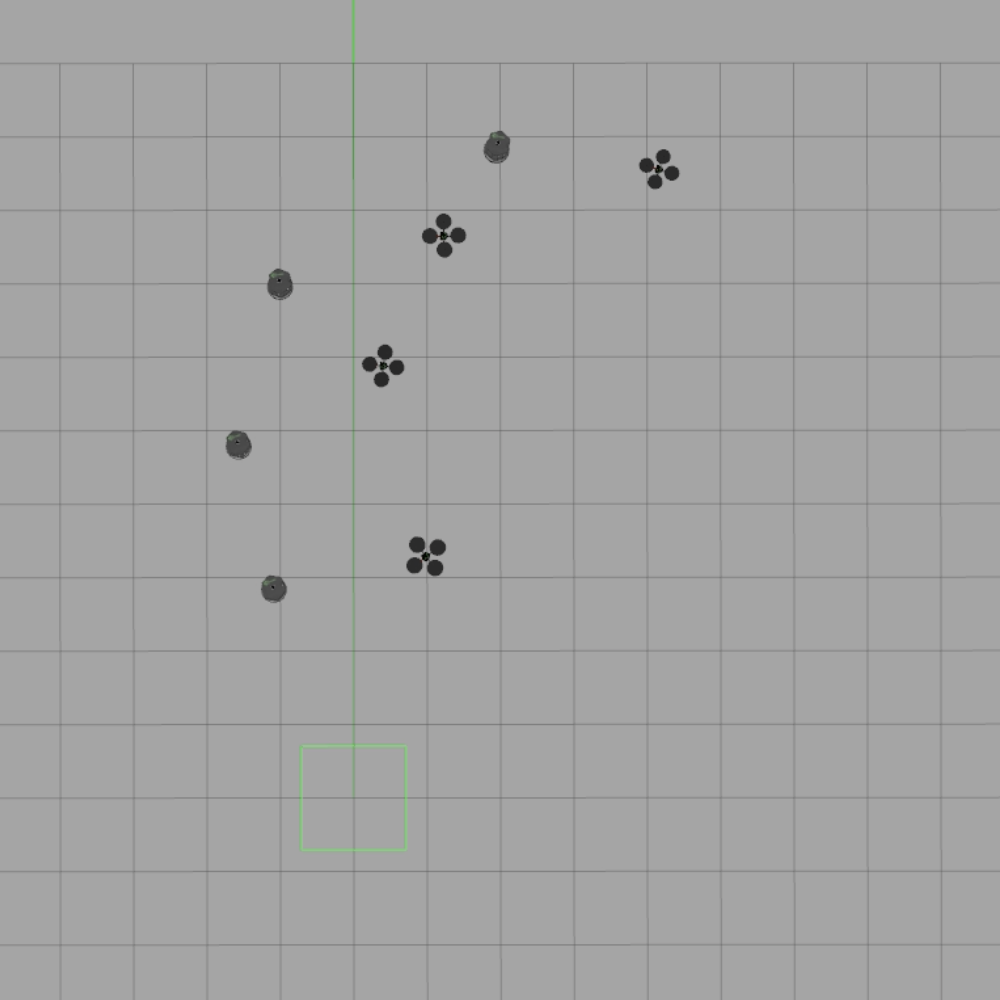}}~~~~~
\subfigure[$t = 60$]{\includegraphics[width=0.483\columnwidth]{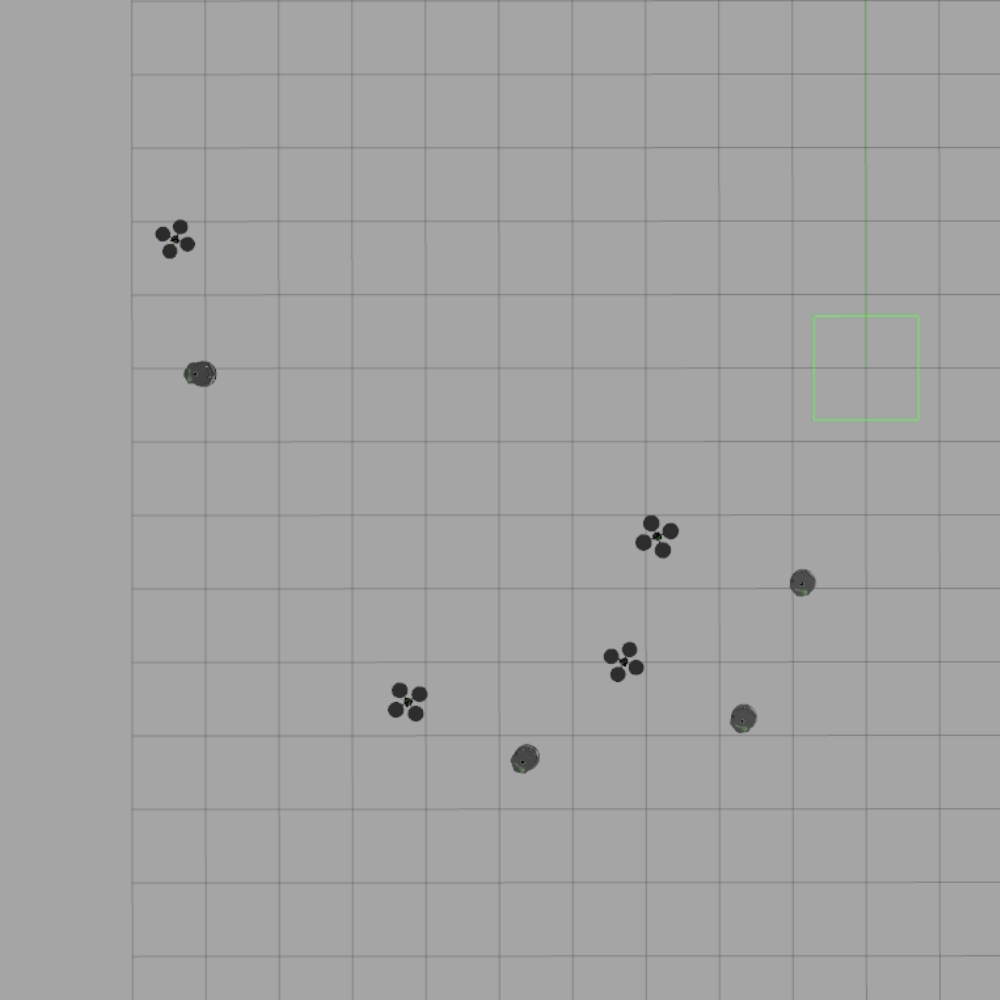}}
}
\caption{A top view of target tracking in action using Algorithm \ref{algorithm:complete_assignment} for the assignment with sufficient sensing in Gazebo environment. The Hummingbird drones are the robots and the Scarab cars (solid dots) are the targets. A robot can sufficiently track a target.}
\label{fig:alg1_sim}
\end{figure*}

\begin{figure}[t]
    \centering
    \includegraphics[width=0.79\columnwidth]{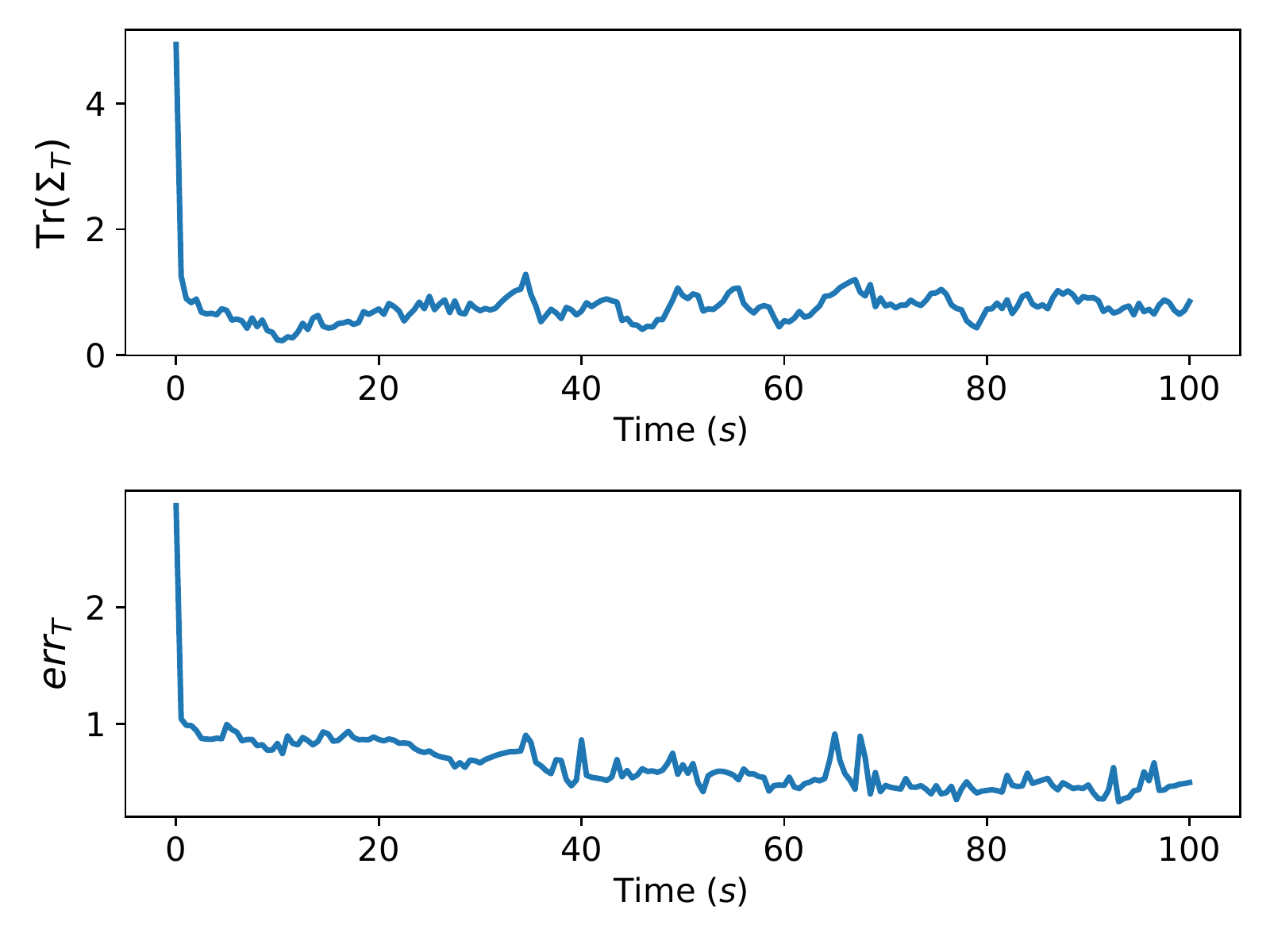}
    \caption{The trace of the covariance matrix (top) and the mean squared estimation error (bottom) of three targets during 100 time steps using Algorithm \ref{algorithm:complete_assignment}. Each robot is equipped with a range-and-bearing sensor.}
    \label{fig:alg1_err}
\end{figure}

\paragraph{Sufficient Sensing}
We first illustrate the performance of Algorithm~\ref{algorithm:complete_assignment} for solving the assignment problem with sufficient sensing (Problem~\ref{prob:problem1}). We use $N=4$ robots, each with a range-and-bearing sensor, to track $M=4$ targets since a single robot is sufficient to estimate the target state in this case. Snapshots of the target tracking at time steps $t=0, 30, 60$ are shown in Figure~\ref{fig:alg1_sim}. 
Here, each target is moving on a circle, centered at the centre of the Gazebo world (represented as the green box in the figures). Notably, Algorithm~\ref{algorithm:complete_assignment} assigns robot-actions to targets to maximize the overall tracking quality, which is defined by the reduction in the trace of the covariance matrix. With  Algorithm~\ref{algorithm:complete_assignment}, the robots are able to keep following and tracking the targets. The pattern of one-to-one assignment can be clearly observed from Figure~\ref{fig:alg1_sim}. Moreover, we plot the trace of the covariance matrix and mean squared estimation errors of the targets in Figure~\ref{fig:alg1_err}. The mean squared estimation error for the targets is defined by $\mathrm{err}_{T} = \frac{1}{M}\sum_{j=1}^{M}\|\hat{y}_{j}-y_{j}\|_2$ with $\hat{y}_{j}$ and $y_{j}$ as the estimated position and true position of each target $j$, respectively. 
Figure~\ref{fig:alg1_err} shows that Algorithm~\ref{algorithm:complete_assignment} quickly reduces the trace of the covariance matrix and the estimation error, thus achieving and maintaining a high tracking quality.

\paragraph{Limited Sensing}

We then show the performance of Algorithm~\ref{algorithm:limited_assignment} for solving the assignment problem with limited sensing (Problem~\ref{prob:problem2}). In this case, we use $N=6$ robots, each with a range sensor, to track $M=3$ targets since two robots are necessary to track one target.  Snapshots of the target tracking at three different time steps ($t=0, 30, 60$) are shown in Figure~\ref{fig:alg1_sim}. Note that Algorithm \ref{algorithm:limited_assignment} assigns pairs of robot-actions to targets to maximize the overall tracking quality. The assignment pattern of two robots to one target can be clearly observed in Figure \ref{fig:alg2_sim}. In addition, Figure \ref{fig:alg2_err} shows that Algorithm \ref{algorithm:limited_assignment} reduces the trace of the covariance matrix and the estimation error at early time steps and keeps them low as the tracking evolves. These results demonstrate the effectiveness of Algorithm \ref{algorithm:limited_assignment} for assigning pairs of robot-actions to track targets. 


\begin{figure*}[tb]
\centering{
\subfigure[$t = 0$]{\includegraphics[width=0.483\columnwidth]{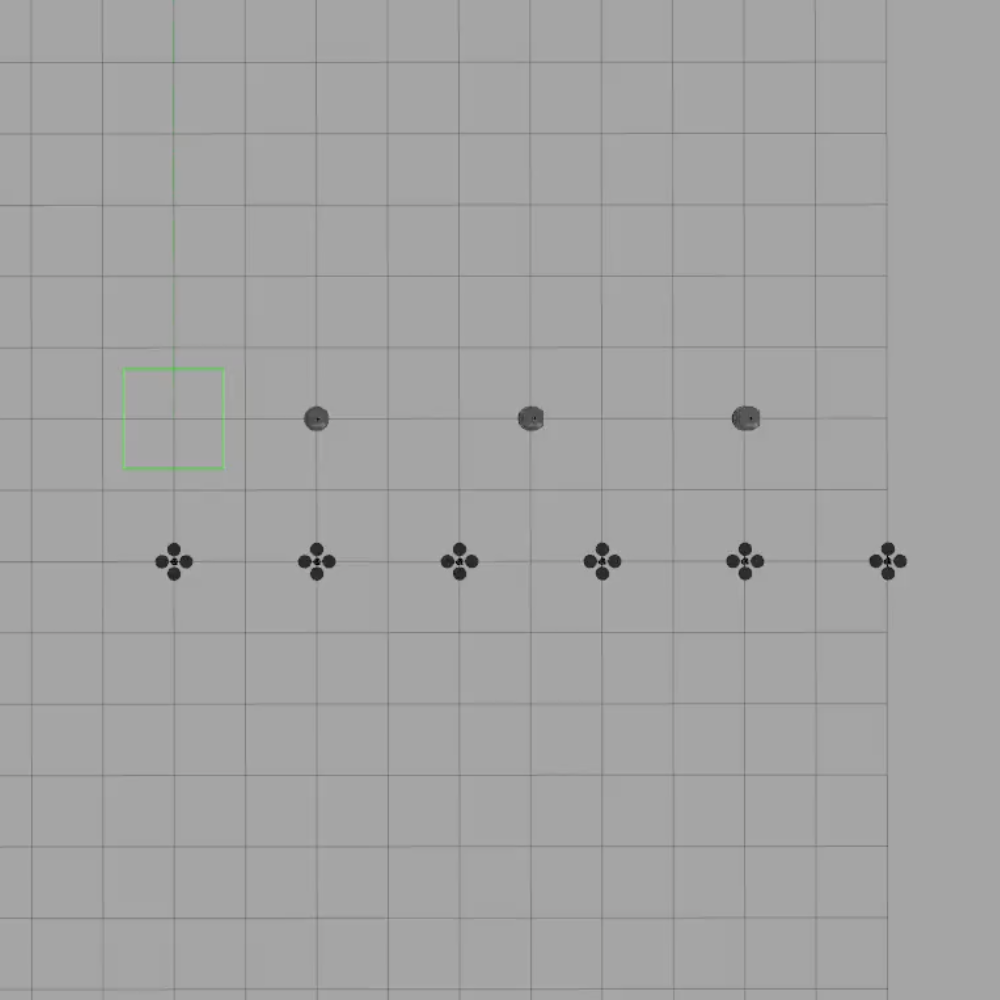}}~~~~~
\subfigure[$t = 30$]{\includegraphics[width=0.483\columnwidth]{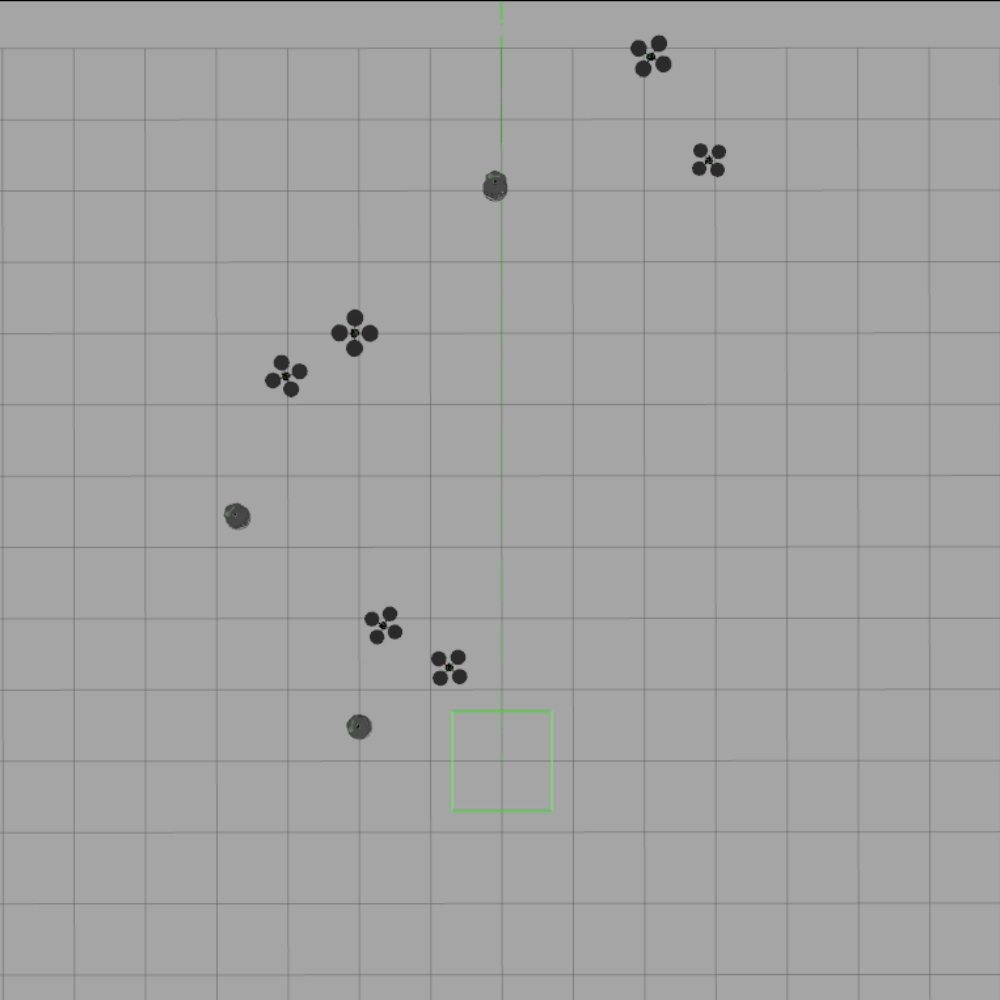}}~~~~~
\subfigure[$t = 60$]{\includegraphics[width=0.483\columnwidth]{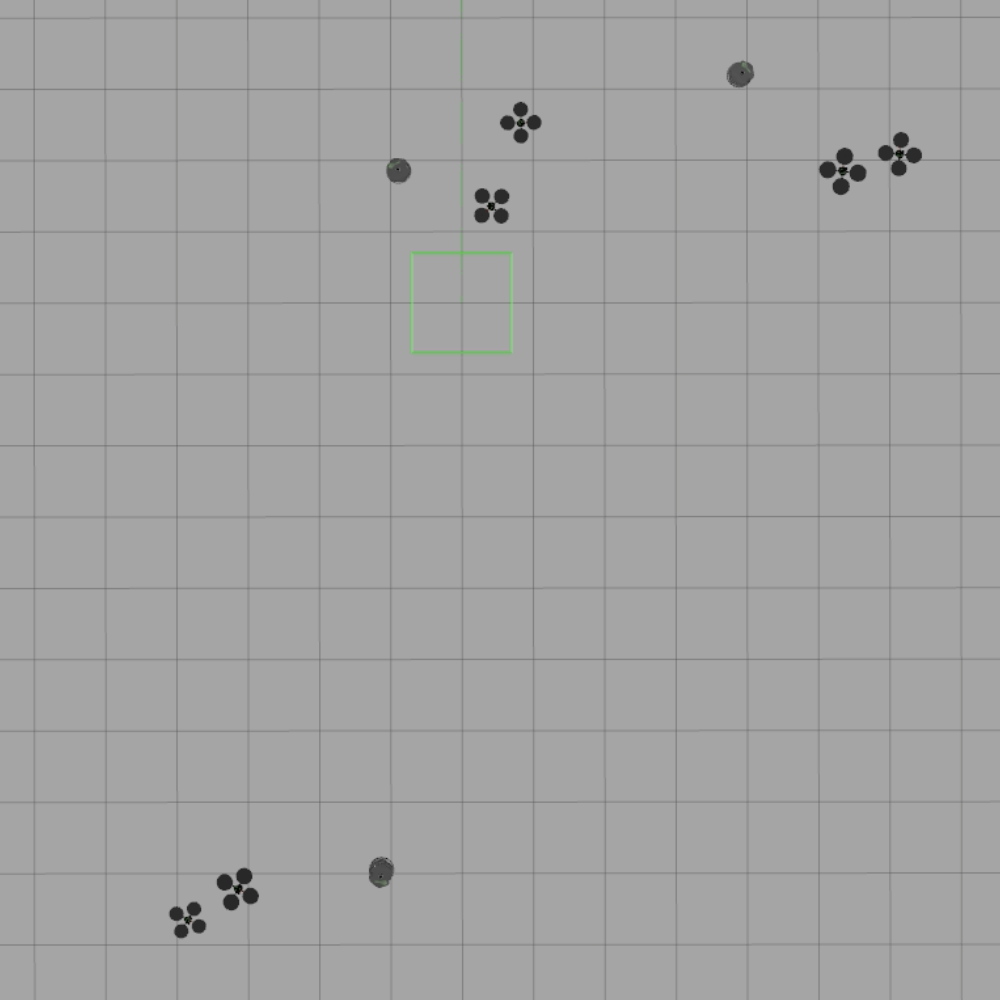}}~~~~~
}
\caption{A top view of target tracking in action using Algorithm \ref{algorithm:limited_assignment} for the assignment with limited sensing in Gazebo environment. The Hummingbird drones are the robots, and the Scarab cars (solid dots) are the targets. Two robots are necessary to track a target.} 
\label{fig:alg2_sim}
\end{figure*}

\begin{figure}[tb]
    \centering
    \includegraphics[width=0.8\columnwidth]{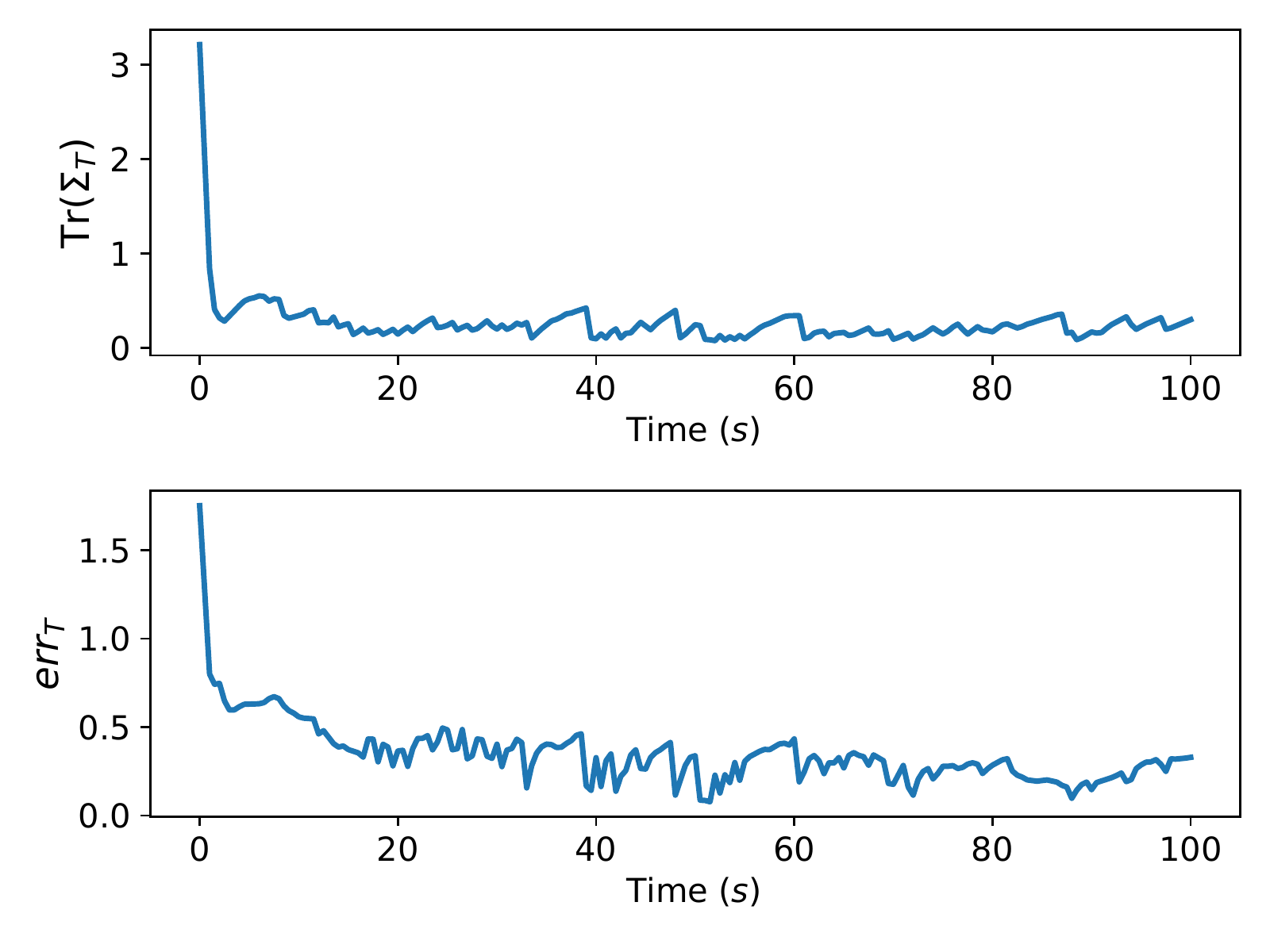}
    \caption{
    The trace of the covariance matrix (top) and the mean squared estimation error (bottom) of three targets during 100 time steps using Algorithm \ref{algorithm:limited_assignment}. Each robot is equipped with a range sensor only.}
    \label{fig:alg2_err}
\end{figure}

\subsection{Quantitative Result}
\label{subsec:quant}

The assignment problems (Problem~\ref{prob:problem1} and Problem~\ref{prob:problem2}) are NP-complete. Thus, it is infeasible to find the optimal solution in polynomial time. To evaluate the proposed greedy algorithms (Algorithm~\ref{algorithm:complete_assignment} and Algorithm~\ref{algorithm:limited_assignment}), we utilize two baseline algorithms. In small-scale cases where the number of robots and targets is small, we use an exhaustive search algorithm to compute the optimal solution. The exhaustive search algorithm enumerates all possible cases to find the best assignment. There are $\prod_{m=0}^{M-1} (N-m)A$ and $\prod_{m=0}^{M-1} \binom{N-2m}{2}A^2$ possibilities in Problem~\ref{prob:problem1} and Problem~\ref{prob:problem2}, respectively, with $A$ denoting the number of actions for each robot. Thus, the exhaustive search algorithm runs in exponential time and is infeasible when the number of robots and targets is large. Therefore, for large-scale cases, we compute an upper bound value of the optimal solution by solving the relaxed versions of Problem~\ref{prob:problem1} and Problem~\ref{prob:problem2}, respectively, which are formally defined in Problem~\ref{prob:problem12} and Problem~\ref{prob:problem22} respectively.   

\begin{problem}
    [Relaxed Assignment with Sufficient Sensing]
    Given a set of robots $\mathcal{R}=\{1, \cdots, N\}$, each with sufficient sensing capability, a set of candidate actions for all robots $\mc{A}$, and a set of targets $\mc{T}=\{1, \cdots, M\}$, find an assignment of robot-actions to targets to
    \begin{align*}
    \begin{split}
        \max \sum_{j=1}^M q(\phi_1(j), j)
    \end{split}
    \end{align*}
    with added constraints that each robot-action is assigned to at most one target, i.e., $|\phi^{-1}(i^k)| \leq 1$, assuming $N\geq M$. 
    \label{prob:problem12}
\end{problem}

\begin{problem}
    [Relaxed Assignment with Limited Sensing]
    Given a set of robots $\mathcal{R}=\{1, \cdots, N\}$, each with limited sensing capability, a set of candidate actions for all robots $\mc{A}$, and a set of targets $\mc{T}=\{1, \cdots, M\}$, find an assignment of pairs of robot-actions to targets to
    \begin{align*}
    \begin{split}
        \max \sum_{j=1}^M q(\{\phi_1(j), \phi_2(j)\}, j)
    \end{split}
    \end{align*}
    with added constraints that each robot-action is assigned to at most one target, i.e., $|\phi^{-1}(i^k)| \leq 1$, assuming $N\geq 2M$. 
    \label{prob:problem22}
\end{problem}

Notably, in the relaxed versions, a robot is allowed to select multiple actions per step, which violates the natural constraint and is only for the purpose of comparisons. Clearly, solving the relaxed versions provides us an upper bound of the optimality for Problem~\ref{prob:problem1} and Problem~\ref{prob:problem2}. The upper bound can be used for the comparisons of the greedy algorithms. The relaxed versions can be formulated as the maximum matching problems~\cite{cormen2009introduction} by making a suitable number of copies of the targets and can be solved using the Hungarian algorithm~\cite{kuhn1955hungarian} in polynomial time. We denote the tracking quality obtained by the Hungarian algorithm for the relaxed versions of the problems as $q(\texttt{HUNGARIAN})$.

For comparisons, we randomly generate the positions of the robots and targets for 10 trials with each number of robots and targets. We set $N=L$ and $N=2L$ for sufficient and limited sensing scenarios, respectively. The robots have the same action sets as described in Sec.\ref{subsec:quali}.


\paragraph{Sufficient Sensing}
We run comparisons up to $M = N = 8$ since the exhaustive search algorithm runs out of memory with a higher number of robots (or targets) (there are 1,735,643,790,720 
possible combinations for each trial). From Fig.~\ref{fig:alg1_baseline}-(a), we observe that $q(\texttt{HUNGARIAN})$ has the best performance since it gives an upper bound of $q(\texttt{OPT})$. The average of $q(\texttt{GREEDY})/q(\texttt{HUNGARIAN})\approx0.92$ and of $q(\texttt{GREEDY})/q(\texttt{OPT})\approx0.98$ for $M = \{1,\cdots,8\}$. The result shows that $q(\texttt{GREEDY})$ stays close to $q(\texttt{OPT})$ and the upper bound $q(\texttt{HUNGARIAN})$, and much higher than the theoretical bound of $\frac{1}{2}q(\texttt{OPT})$, which implies that the Algorithm~\ref{algorithm:complete_assignment} performs even better than the lower bound in practice.

With larger values of  $M$ (or $N$), we compare  $q(\texttt{GREEDY})$ to $q(\texttt{HUNGARIAN})$ without the exhaustive search algorithm. As shown in Fig.~\ref{fig:alg1_baseline}-(b), $q(\texttt{GREEDY})$ is close to $q(\texttt{HUNGARIAN})$ with $q(\texttt{GREEDY})/q(\texttt{HUNGARIAN})\approx0.93$ for $M = \{1,\cdots,50\}$ and much higher than $\frac{1}{2}q(\texttt{HUNGARIAN})$. Therefore, even though Algorithm~\ref{algorithm:complete_assignment} has a theoretical $1/2$--approximation ratio, it works much better in practice. This is because the approximation ratio is derived by considering the worst-case performance of Algorithm~\ref{algorithm:complete_assignment}. 

\paragraph{Limited Sensing}
For the small-scale comparison, we run algorithms up to $M = 4$ and $N = 8$ because of the exponential running time of the exhaustive search algorithm (there are 108,477,736,920 
possible combinations for each trial). Here, each robot has a range sensor only, which means the robot can take range measurements only. As shown in Fig.~\ref{fig:alg2_baseline}-(a), the average of $q(\texttt{GREEDY})/q(\texttt{HUNGARIAN})\approx0.94$ and $q(\texttt{GREEDY})/q(\texttt{OPT})\approx0.97$ for $M=\{1,\cdots, 4\}$. Moreover, $q(\texttt{GREEDY})$ is close to $q(\texttt{HUNGARIAN})$ and $q(\texttt{OPT})$ and much higher than the theoretical bound of $\frac{1}{3}q(\texttt{OPT})$. This suggests that the Algorithm~\ref{algorithm:limited_assignment}, in practice, performs even better than the lower bound.

With larger values of  $M$ (or $N$), we compare  $q(\texttt{GREEDY})$ to $q(\texttt{HUNGARIAN})$ without the exhaustive search algorithm. As shown in Fig.~\ref{fig:alg2_baseline}-(b), $q(\texttt{GREEDY})$ is close to $q(\texttt{HUNGARIAN})$ with $q(\texttt{GREEDY})/q(\texttt{HUNGARIAN})\approx0.93$ for $M = \{1,\cdots,50\}$ and much higher than $\frac{1}{3}q(\texttt{HUNGARIAN})$. Therefore, even though Algorithm~\ref{algorithm:limited_assignment} has a theoretical $1/3$--approximation ratio, it works much better in practice. This is because the approximation ratio is derived by considering the worst-case performance of Algorithm~\ref{algorithm:limited_assignment}. 

For large-scale cases, we compare  $q(\texttt{GREEDY})$ to $q(\texttt{HUNGARIAN})$ up to $M=25$ and $N=50$. 
Fig.~\ref{fig:alg2_baseline}-(b) shows that  $q(\texttt{GREEDY})$ is close to $q(\texttt{HUNGARIAN})$ with $q(\texttt{GREEDY})/q(\texttt{HUNGARIAN)}\approx0.93$ for $M = \{1,\cdots,25\}$. In addition, $q(\texttt{GREEDY})$ is much higher than $\frac{1}{3}q(\texttt{HUNGARIAN})$. 
Thus, even though Algorithm~\ref{algorithm:limited_assignment} has a theoretical 1/3--approximation bound, it performs much better in practice. This is because the approximation ratio is computed by considering the worst-case performance of Algorithm~\ref{algorithm:limited_assignment}. 

\begin{figure}[t]
\centering{
\subfigure[Small scale]{\includegraphics[width=0.493\columnwidth]{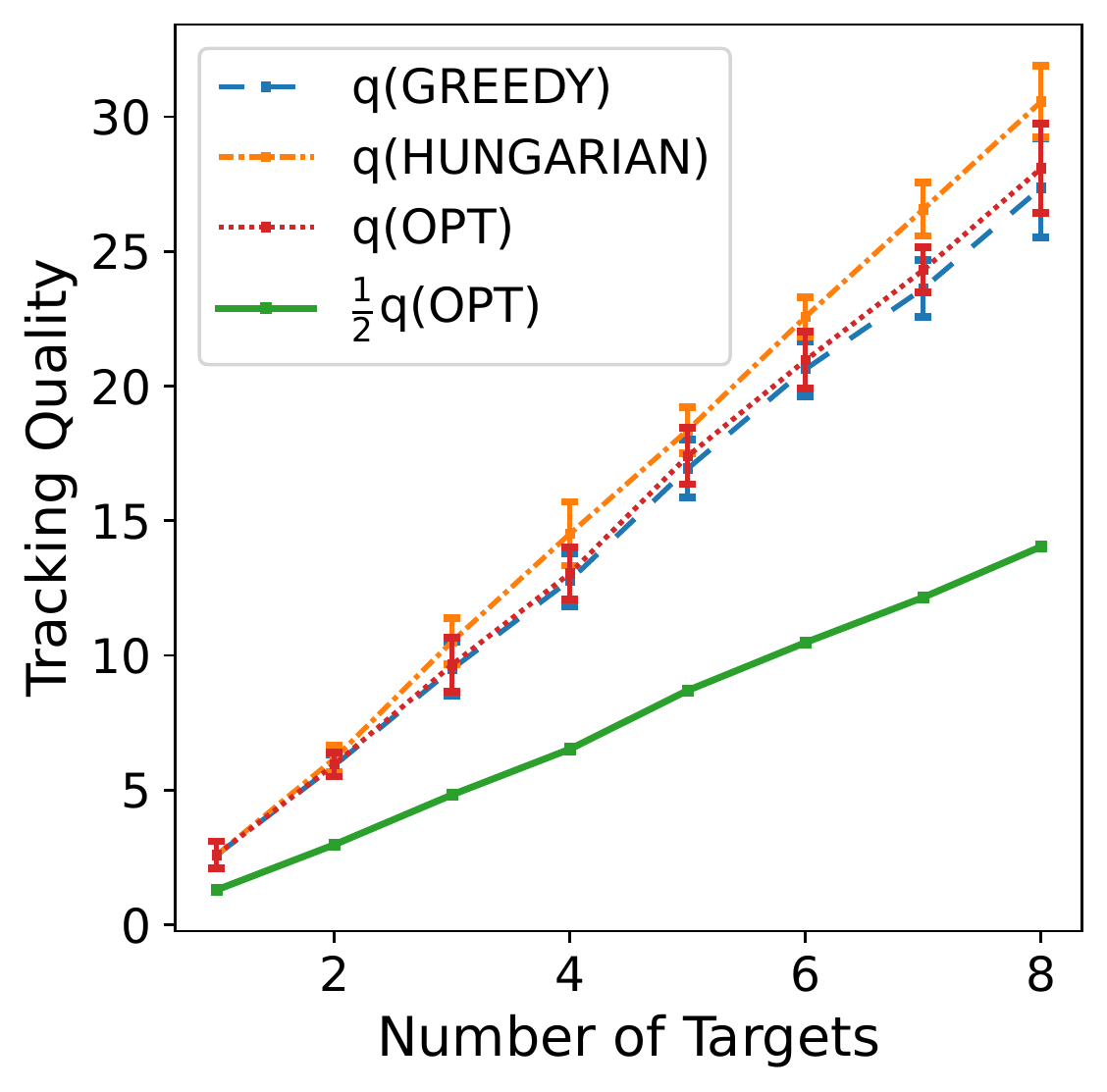}}
\subfigure[Large scale]{\includegraphics[width=0.493\columnwidth]{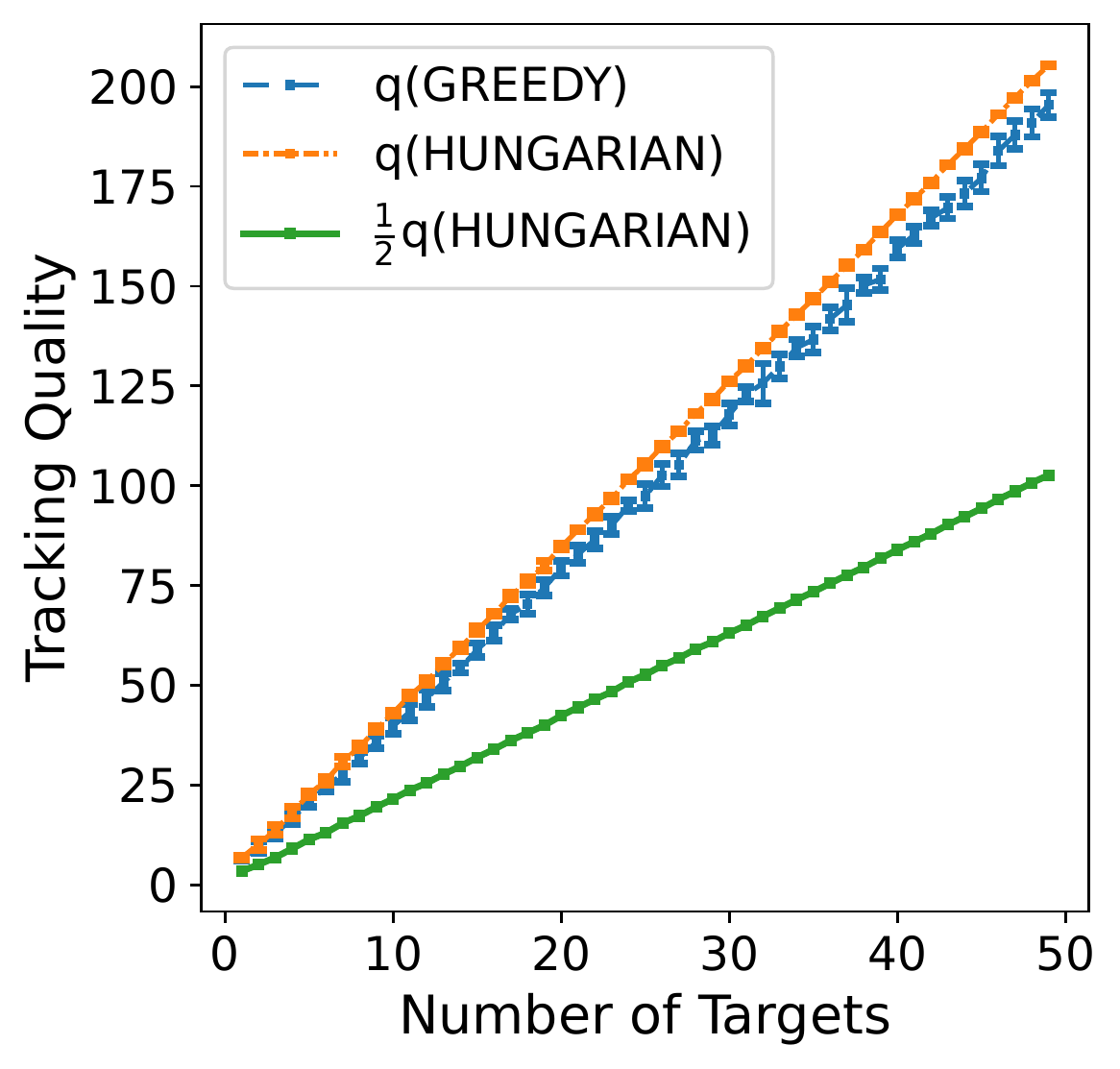}}
}

\caption{Comparison of the total tracking quality of Algorithm~\ref{algorithm:complete_assignment} with the exhaustive search algorithm in small-scale cases (a) and with the Hungarian algorithm in large-scale cases (b), respectively.} 
\label{fig:alg1_baseline}
\end{figure}

\begin{figure}[t]
\centering{
\subfigure[Small scale]{\includegraphics[width=0.493\columnwidth]{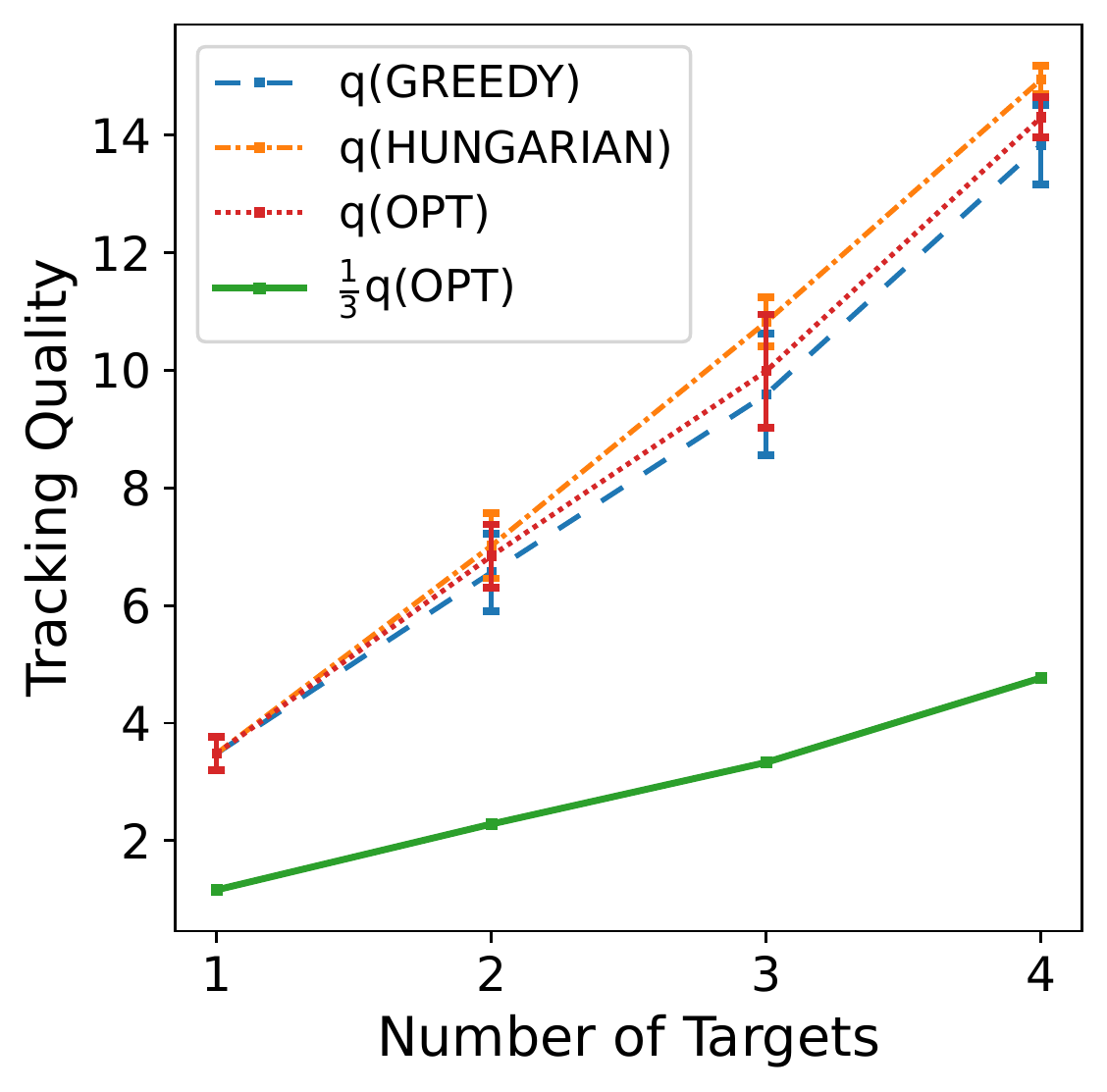}}
\subfigure[Large scale]{\includegraphics[width=0.493\columnwidth]{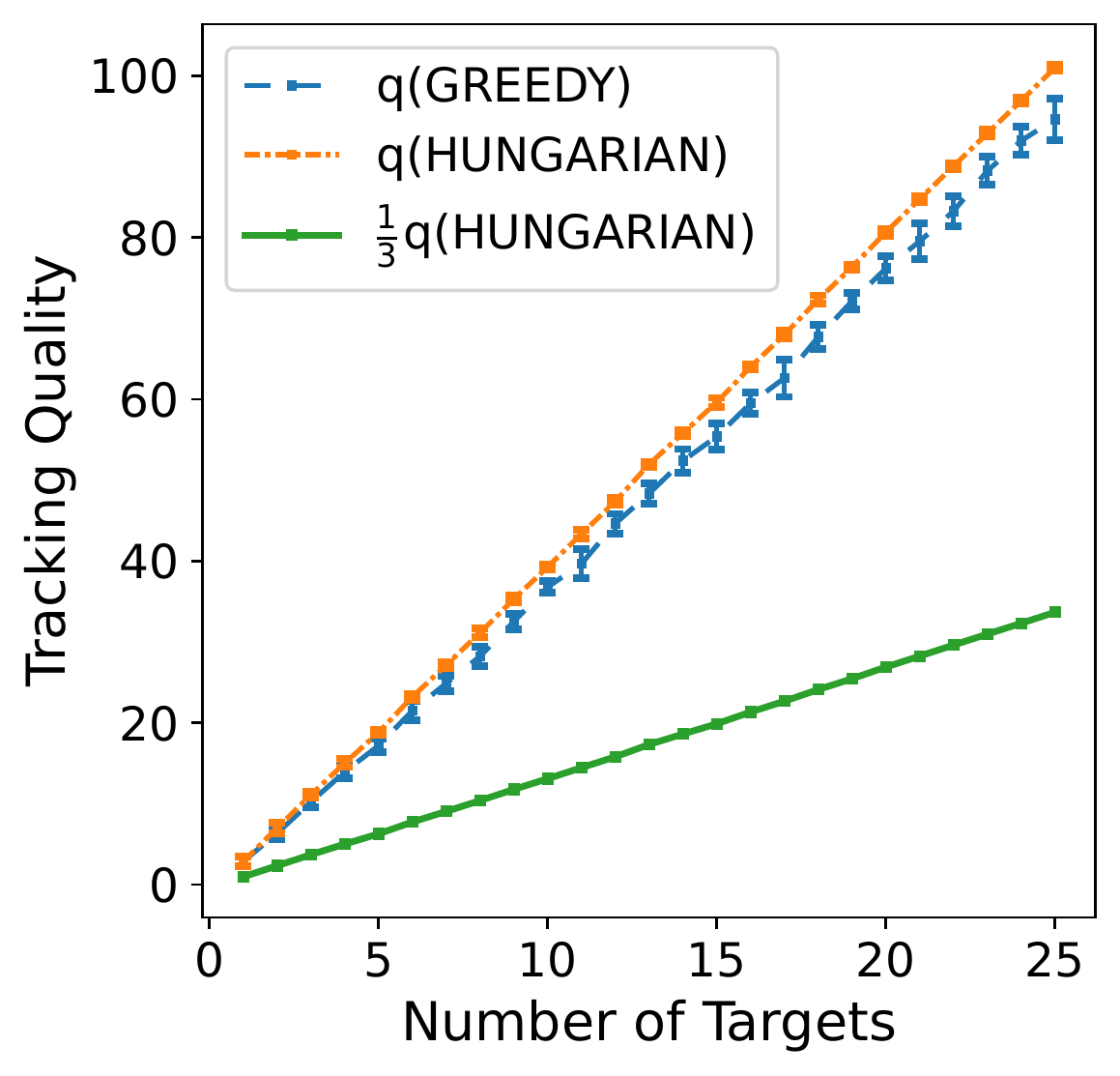}}
}

\caption{Comparison of the total tracking quality of Algorithm~\ref{algorithm:limited_assignment} with the exhaustive search algorithm in small-scale cases (a) and with the Hungarian algorithm in large-scale cases (b), respectively.} 
\label{fig:alg2_baseline}
\end{figure}


\section{Conclusion}
We presented two greedy algorithms (Algorithm~\ref{algorithm:complete_assignment} and Algorithm~\ref{algorithm:limited_assignment}) to solve the problems of assigning robots with actions to track moving targets for robots with both sufficient and limited sensing capability. We derived and proved the approximation bounds for the two greedy algorithms, \textit{i.e.,} $1/2$--approximation bound for Algorithm~\ref{algorithm:complete_assignment} and $1/3$--approximation bound for Algorithm~\ref{algorithm:limited_assignment}. Moreover, we illustrated the algorithms' effectiveness through Gazebo simulations, where the assignment computed by the algorithms consistently steers robots toward following and tracking the targets. Further, we conducted quantitative comparisons to demonstrate that the algorithms perform close to the optimal solution and much better than the theoretical bound.


The assignment problems studied are centralized. Therefore, 
an immediate future research direction is to investigate the decentralized versions of the proposed problems and design corresponding decentralized assignment algorithms~\cite{qu2019distributed}. 
Another future direction is to design resilient assignment algorithms, which adaptively reconfigure the team and reassign robots when robot (or sensor or communication) failures occur ~\cite{ramachandran2020resilient,zhou2023robust}.

\bibliographystyle{IEEEtran}
\bibliography{refs}

\begin{thebibliography}{10}
\providecommand{\url}[1]{#1}
\csname url@samestyle\endcsname
\providecommand{\newblock}{\relax}
\providecommand{\bibinfo}[2]{#2}
\providecommand{\BIBentrySTDinterwordspacing}{\spaceskip=0pt\relax}
\providecommand{\BIBentryALTinterwordstretchfactor}{4}
\providecommand{\BIBentryALTinterwordspacing}{\spaceskip=\fontdimen2\font plus
\BIBentryALTinterwordstretchfactor\fontdimen3\font minus
  \fontdimen4\font\relax}
\providecommand{\BIBforeignlanguage}[2]{{%
\expandafter\ifx\csname l@#1\endcsname\relax
\typeout{** WARNING: IEEEtran.bst: No hyphenation pattern has been}%
\typeout{** loaded for the language `#1'. Using the pattern for}%
\typeout{** the default language instead.}%
\else
\language=\csname l@#1\endcsname
\fi
#2}}
\providecommand{\BIBdecl}{\relax}
\BIBdecl

\bibitem{grocholsky2006cooperative}
B.~Grocholsky, J.~Keller, V.~Kumar, and G.~Pappas, ``Cooperative air and ground
  surveillance,'' \emph{IEEE Robotics \& Automation Magazine}, vol.~13, no.~3,
  pp. 16--25, 2006.

\bibitem{tokekar2013tracking}
P.~Tokekar, E.~Branson, J.~Vander~Hook, and V.~Isler, ``Tracking aquatic
  invaders: Autonomous robots for monitoring invasive fish,'' \emph{IEEE
  Robotics \& Automation Magazine}, vol.~20, no.~3, pp. 33--41, 2013.

\bibitem{robin2016multi}
C.~Robin and S.~Lacroix, ``Multi-robot target detection and tracking: taxonomy
  and survey,'' \emph{Autonomous Robots}, vol.~40, pp. 729--760, 2016.

\bibitem{jiang2003optimal}
S.~Jiang, R.~Kumar, and H.~E. Garcia, ``Optimal sensor selection for
  discrete-event systems with partial observation,'' \emph{IEEE Transactions on
  Automatic Control}, vol.~48, no.~3, pp. 369--381, 2003.

\bibitem{spletzer2003dynamic}
J.~R. Spletzer and C.~J. Taylor, ``Dynamic sensor planning and control for
  optimally tracking targets,'' \emph{The International Journal of Robotics
  Research}, vol.~22, no.~1, pp. 7--20, 2003.

\bibitem{kamath2007triangulation}
S.~Kamath, E.~Meisner, and V.~Isler, ``Triangulation based multi target
  tracking with mobile sensor networks,'' in \emph{IEEE International
  Conference on Robotics and Automation}.\hskip 1em plus 0.5em minus
  0.4em\relax IEEE, 2007, pp. 3283--3288.

\bibitem{zavlanos2008dynamic}
M.~M. Zavlanos and G.~J. Pappas, ``Dynamic assignment in distributed motion
  planning with local coordination,'' \emph{IEEE Transactions on Robotics},
  vol.~24, no.~1, pp. 232--242, 2008.

\bibitem{tekdas2010sensor}
O.~Tekdas and V.~Isler, ``Sensor placement for triangulation-based
  localization,'' \emph{IEEE transactions on Automation Science and
  Engineering}, vol.~7, no.~3, pp. 681--685, 2010.

\bibitem{nam2015your}
C.~Nam and D.~A. Shell, ``When to do your own thing: Analysis of cost
  uncertainties in multi-robot task allocation at run-time,'' in \emph{Robotics
  and Automation (ICRA), 2015 IEEE International Conference on}.\hskip 1em plus
  0.5em minus 0.4em\relax IEEE, 2015, pp. 1249--1254.

\bibitem{chopra2017distributed}
S.~Chopra, G.~Notarstefano, M.~Rice, and M.~Egerstedt, ``A distributed version
  of the hungarian method for multirobot assignment,'' \emph{IEEE Transactions
  on Robotics}, vol.~33, no.~4, pp. 932--947, 2017.

\bibitem{zhou2019sensor}
L.~Zhou and P.~Tokekar, ``Sensor assignment algorithms to improve observability
  while tracking targets,'' \emph{IEEE Transactions on Robotics}, vol.~35,
  no.~5, pp. 1206--1219, 2019.

\bibitem{yang2020algorithm}
F.~Yang and N.~Chakraborty, ``Algorithm for multi-robot chance-constrained
  generalized assignment problem with stochastic resource consumption,'' in
  \emph{2020 IEEE/RSJ International Conference on Intelligent Robots and
  Systems (IROS)}.\hskip 1em plus 0.5em minus 0.4em\relax IEEE, 2020, pp.
  4329--4336.

\bibitem{kuhn1955hungarian}
H.~W. Kuhn, ``The hungarian method for the assignment problem,'' \emph{Naval
  research logistics quarterly}, vol.~2, no. 1-2, pp. 83--97, 1955.

\bibitem{schlotfeldt2018anytime}
B.~Schlotfeldt, D.~Thakur, N.~Atanasov, V.~Kumar, and G.~J. Pappas, ``Anytime
  planning for decentralized multirobot active information gathering,''
  \emph{IEEE Robotics and Automation Letters}, vol.~3, no.~2, pp. 1025--1032,
  2018.

\bibitem{tokekar2011active}
P.~Tokekar, J.~Vander~Hook, and V.~Isler, ``Active target localization for
  bearing based robotic telemetry,'' in \emph{2011 IEEE/RSJ International
  Conference on Intelligent Robots and Systems}.\hskip 1em plus 0.5em minus
  0.4em\relax IEEE, 2011, pp. 488--493.

\bibitem{alvarez2017acoustic}
F.~J. {\'A}lvarez, T.~Aguilera, J.~A. Paredes, and J.~A. Moreno, ``Acoustic tag
  identification based on noncoherent fsk detection with portable devices,''
  \emph{IEEE Transactions on Instrumentation and Measurement}, vol.~67, no.~2,
  pp. 270--278, 2017.

\bibitem{jawaid2015submodularity}
S.~T. Jawaid and S.~L. Smith, ``Submodularity and greedy algorithms in sensor
  scheduling for linear dynamical systems,'' \emph{Automatica}, vol.~61, pp.
  282--288, 2015.

\bibitem{zhou2023robust}
L.~Zhou and V.~Kumar, ``Robust multi-robot active target tracking against
  sensing and communication attacks,'' \emph{IEEE Transactions on Robotics},
  2023.

\bibitem{atanasov2015decentralized}
N.~Atanasov, J.~Le~Ny, K.~Daniilidis, and G.~J. Pappas, ``Decentralized active
  information acquisition: Theory and application to multi-robot slam,'' in
  \emph{2015 IEEE International Conference on Robotics and Automation
  (ICRA)}.\hskip 1em plus 0.5em minus 0.4em\relax IEEE, 2015, pp. 4775--4782.

\bibitem{cormen2009introduction}
T.~H. Cormen, \emph{Introduction to algorithms}.\hskip 1em plus 0.5em minus
  0.4em\relax MIT press, 2009.

\bibitem{qu2019distributed}
G.~Qu, D.~Brown, and N.~Li, ``Distributed greedy algorithm for multi-agent task
  assignment problem with submodular utility functions,'' \emph{Automatica},
  vol. 105, pp. 206--215, 2019.

\bibitem{ramachandran2020resilient}
R.~K. Ramachandran, L.~Zhou, J.~A. Preiss, and G.~S. Sukhatme, ``Resilient
  coverage: Exploring the local-to-global trade-off,'' in \emph{2020 IEEE/RSJ
  International Conference on Intelligent Robots and Systems (IROS)}.\hskip 1em
  plus 0.5em minus 0.4em\relax IEEE, 2020, pp. 11\,740--11\,747.

\end{thebibliography}

\end{document}